\newcommand{\citet}[1]{\cite{#1}}
\newcommand{\citep}[1]{\cite{#1}}
\crefname{equation}{eq.}{eq.}
\Crefname{equation}{Eq.}{Eq.}
\crefname{definition}{\textbf{definition}}{definitions}
\Crefname{definition}{Definition}{Definitions}
\crefname{assumption}{\textbf{assumption}}{assumptions}
\Crefname{assumption}{Assumption}{Assumptions}
\definecolor{maroon}{RGB}{192,80,77}
\newtheorem{assumption}{Assumption}
\newcommand{\argmax}{\mathop{\mathrm{argmax}}}
\def\E{\mathbb{E}}
\def\R{\mathbb{R}}
\def\cN{\mathcal{N}}
\setlist{leftmargin=10mm}
\def\ind{\mathds{1}}
\begin{document}
\title{Dynamic Pricing with Adversarially-Censored Demands}
%
%
\author{
Jianyu Xu\inst{1} \and Yining Wang\inst{2} \and Xi Chen\inst{3} \and Yu-Xiang Wang\inst{4}
}
%
\institute{Carnegie Mellon University \and University of Texas at Dallas \and New York University \and University of California San Diego\\
}
\maketitle              

\begin{abstract}
We study an online dynamic pricing problem where the potential demand at each time period $t=1,2,\ldots, T$ is stochastic and dependent on the price. However, a perishable inventory is imposed at the beginning of each time $t$, \emph{censoring} the potential demand if it exceeds the inventory level. To address this problem, we introduce a pricing algorithm based on the optimistic estimates of derivatives. We show that our algorithm achieves $\tilde{O}(\sqrt{T})$ optimal regret even with \emph{adversarial} inventory series. Our findings advance the state-of-the-art in online decision-making problems with censored feedback, offering a theoretically optimal solution against adversarial observations.
\keywords{Dynamic pricing  \and Online learning \and Censored feedback.}
\end{abstract}
\section{Introduction}
\label{sec:introduction}
The problem of dynamic pricing, where the seller proposes and adjusts their prices over time, has been studied since the seminal work of \citet{cournot1897researches}. The crux to pricing is to balance the profit of sales per unit with the quantity of sales. Therefore, it is imperative for the seller to learn customers' demand as a function of price (commonly known as the \emph{demand curve}) on the fly. However, the demand can often be obfuscated by the observed quantity of sales, especially when \emph{censored} by \emph{inventory} stockouts. Such instances severely impede the seller from learning the underlying demand distributions, thereby hindering our pursuit of the optimal price.

Existing literature has devoted considerable effort to the intersection of pricing and inventory decisions. Such works often consider scenarios with indirectly observable lost demands \citep{keskin2022data}, recoverable leftover demands \citet{chen2019coordinating}, or controllable inventory level \citep{chen2023optimal}. However, these assumptions do not always align with the realities faced in various common business environments. To illustrate, we present two pertinent examples:

\begin{example}[Performance Tickets]\label{example_performance}
    Imagine that we manage a touring company that arranges a series of performances featuring a renowned artist across various cities. Each venue has a different seating capacity, which substantially affects how we set ticket prices. If the price is too high, it may deter attendance, leading to lower revenue. On the other hand, setting it too low could mean that tickets sell out quickly, leaving many potential attendees unable to purchase them. We do not know exactly how many people attempt to buy tickets and fail. Moreover, because the performances are unique, there is no assurance that those who miss out on one show will choose or be able to attend another. This variability in venue size across different locations requires us to continually adapt our pricing strategy. With more adaptive prices, we can maximize both attendance and revenue while accommodating unpredictable changes in seat availability.
\end{example}
\begin{example}[Fruit Retails]\label{example_fruit}
    Sweetsop (\emph{Annona squamosa}, or so-called ``sugar apple'') is a  particularly-perishable tropical fruit, typically lasting only 2 to 4 days \citep{crane2005sugar}. Suppose we manage a local fruit shop and have partnered with a nearby farm for the supply of sweetsops during the harvest season. Due to their perishable nature, we receive sweetsops as soon as they are ripe and picked from the farm every day. This irregular supply means that some days we might receive a large quantity while getting very few on other days. We must quickly sell these fruits before they spoil, yet managing the price becomes challenging. If we exhaust our inventory ahead of time, customers will turn to other fruit shops for purchase instead of waiting for our next restock.  
\end{example}
Products in the two instances above have the following properties:
\begin{enumerate}
    \item Inventory levels are determined by natural factors, and are arbitrarily given for different individual time periods.
    \item Products are perishable and only salable within a single time period.
\end{enumerate}

\subsection{Problem Overview}
\label{subsec:problem_overview}
In this work, we study a dynamic pricing problem where the products possess these properties. The problem model is defined as follows. At each time $t=1,2,\ldots, T$, we firstly propose a price $p_t$, and then a price-dependent \emph{potential demand} occurs as $d_t$. However, we might have no access to $d_t$ as it is censored by an \emph{adversarial} inventory level $\gamma_t$. Instead, we observe a censored demand $D_t = \min\{\gamma_t, d_t\}$ and receive the revenue $r_t$ as a reward at $t$. Our goal is to approach the optimal price $p_t^*$ at every time $t$, thereby maximizing the cumulative revenue.

\fbox{\parbox{0.95\textwidth}{Dynamic pricing with adversarial inventory constraint. For $t=1,2,...,T:$
		\noindent
		\begin{enumerate}[leftmargin=*,align=left]
			\setlength{\itemsep}{0pt}
                \item The seller (we) receives $\gamma_t$ identical products.
			\item The seller proposes a price $p_t\geq0$.
                \item The customers generate an invisible potential demand $d_t\geq 0$, dependent on $p_t$.
			\item The market reveals an inventory-censored demand $D_t=\min\{\gamma_t, d_t\}$.
			\item The seller gets a reward $r_t = p_t\cdot D_t$.
                \item All unsold products perish before $t+1$.
		\end{enumerate}
	}
}

\bigskip
\paragraph{The notion of ``adversarial'' inventory.} We characterize this problem as having adversarially-chosen inventory levels, though our setting differs from classical adversarial bandits or online convex optimization. Here, an adversary may pre-commit to an arbitrary inventory sequence $\{\gamma_t\}_{t=1}^T$, while the underlying demand noise remains stochastic---a hybrid model that combines adversarial contexts with stochastic outcomes, following the convention of ``adversarial features'' in online learning \citep{cohen2020feature_journal, liu2021optimal}. Since inventory $\gamma_t$ is revealed at the beginning of round $t$, we can define per-round optimal prices $p_t^*$. This leads to a stronger regret benchmark: we compete against this sequence of optimal actions $\{p_t^*\}_{t=1}^T$ rather than the best fixed decision in hindsight typical of standard adversarial settings. This distinction is crucial as the inventory levels create a non-stationary optimization landscape where optimal prices vary dramatically across periods. 

\subsection{Summary of Contributions}
\label{subsec:summary_contribution}
We consider the problem setting shown above and assume that the potential demand $d_t=a-bp_t+N_t$ is \emph{linear} and \emph{noisy}. Here $a, b\in\R^+$ are fixed unknown parameters and $N_t$ is an \emph{unknown} and i.i.d. (independently and identically distributed) noise with zero mean. Under this premise, the key to obtaining the optimal price is to accurately learn the expected reward function $r(p)$, which is equivalent to learning the linear parameters $[a,b]$ and the noise distribution. We are faced with three principal challenges:

\begin{enumerate}
    \item The absence of unbiased observations of the potential demand or its derivatives with respect to $p$, which prevents us from estimating $[a,b]$ directly.
    \item The dependence of the optimal prices on the noise distribution, which is assumed to be unknown and partially censored. 
    \item The arbitrariness of the inventory levels, leading to non-stationary and highly-differentiated optimal prices $\{p_t^*\}$ over time.
\end{enumerate}

In this paper, we introduce an algorithm that employs innovative techniques to resolve the aforementioned challenges. Firstly, we devise a pure-exploration phase that bypasses the censoring effect and obtains an unbiased estimator of $\frac1{b}$ (which leads to $\hat b$ and $\hat a$ as a consequence). Secondly, we maintain estimates of the noise CDF $F(x)$ and $\int F(x)dx$ over a series of discrete $x$'s, as well as the confidence bounds of each estimate. Thirdly, we design an \emph{optimistic} strategy, \textbf{C20CB} as ``Closest-To-Zero Confidence Bound'', that proposes the price $p_t$ whose reward derivative $r'_t(p_t)$ is probably $0$ or closest to $0$ among a set of discretized prices. As we keep updating the estimates of $r'_t(\cdot)$ with shrinking error bar, we asymptotically approach the optimal price $p^*_t$ since $r'_t(p_t^*)=0$ for any $t=1,2,\ldots, T$.

\paragraph{Novelty.} To the best of our knowledge, we are the first to study the online dynamic pricing problem under \emph{adversarial} inventory levels. Our C20CB algorithm attains an \emph{optimal} $\tilde{O}(\sqrt{T})$ regret guarantee with high probability. The methodologies we develop are crucial to our algorithmic design, and are potentially advancing a variety of online decision-making scenarios with censored feedback.

\subsection{Paper Structure}
\label{subsec:paper_structure}
The rest of this paper is organized as follows. We discuss related works in \Cref{sec:related_works}, and then describe the problem setting in \Cref{sec:preliminaries}. We propose our main algorithm C20CB in \Cref{sec:algorithm} and analyze its regret guarantee in \Cref{sec:regret_analysis}. We further discuss potential extensions in \Cref{sec:discussion}, followed by a brief conclusion in \Cref{sec:conclusion}.

\section{Related Works}
\label{sec:related_works}
Here we discuss the closest related works on dynamic pricing, inventory constraints, and network revenue management. For a broader introduction of related literature, please refer to \Cref{app:subsec:related_works}.

\paragraph{Data-driven dynamic pricing.}
Dynamic pricing for identical products is a well-established research area, starting with \citet{kleinberg2003value} and continuing through seminal works by \citet{besbes2009dynamic, broder2012dynamic, wang2014close, wang2021multimodal}. The standard approach involves learning a demand curve from price-sensitive demand arriving in real-time, aiming to approximate the optimal price. \citet{kleinberg2003value} provided algorithms with regret bounds of $O(T^{\frac23})$ and $O(\sqrt{T})$ for arbitrary and infinitely smooth demand curves, respectively. \citet{wang2021multimodal} refined this further, offering an $O(T^{\frac{k+1}{2k+1}})$ regret for $k$-times continuously differentiable demand curves. This line of inquiry is also intricately linked to the multi-armed bandit problems \citep{lai1985asymptotically, auer2002nonstochastic} and continuum-armed bandits \citep{kleinberg2004nearly}, where each action taken reveals a reward without insight into the foregone rewards of other actions.
\paragraph{Pricing with inventory concerns.}
Dynamic pricing problems begin to incorporate inventory constraints with \citet{besbes2009dynamic}, which assumed a fixed initial stock available at the start of the selling period. They introduced near-optimal algorithms for both parametric and non-parametric demand distributions, operating under the assumption that the inventory is non-replenishable and non-perishable. \citet{wang2014close} adopted a comparable framework but allowed customers arrivals to follow a Poisson process. In these earlier works, the actual demand is fully disclosed until the inventory is depleted. Subsequent research allows inventory replenishment, with the seller's decisions encompassing both pricing and restocking at each time interval. \citet{chen2019coordinating} proposed a demand model subject to additive / multiple noise and developed a policy that achieved $O(\sqrt{T})$ regret. More recent studies \citet{chen2020data, keskin2022data} explored the pricing of perishable goods where the unsold inventory will expire. However, the uncensored demand is observable as assumed in both works. Specifically, \citet{chen2020data} allowed recouping backlogged demand, albeit at a cost, and introduced an algorithm with optimal regret. \citet{keskin2022data} focused on the cases where both fulfilled demands and lost sales were observable.

\citet{chenbx2021nonparametric} and their subsequent work, \citet{chen2023optimal}, are the closest works to ours as they adopt similar problem settings: In their works, the demand is \emph{censored} by the inventory level and any leftover inventory or lost sales disappear at the end of each period. With the assumption of concave reward functions and the restriction of at most $m$ price changes, \citet{chenbx2021nonparametric} proposed MLE-based algorithms that attain a regret of $\tilde{O}(T^{\frac1{m+1}})$ in the well-separated case and $\tilde{O}(T^{\frac12+\epsilon})$ for some  $\epsilon=o(1)$ as $T\rightarrow\infty$ in the general case. Under similar assumptions (except infinite-order smoothness), \citet{chen2023optimal} developed a reward-difference estimator, with which they not only enhanced the prior result for concave reward functions to $\tilde{O}(\sqrt{T})$ but also obtained a general $\tilde{O}(T^{2/3})$ regret for non-concave reward functions. Our problem model mirrors their difficulty, as we also lack access to both the uncensored demand and its gradient. However, they allowed the sellers to determine inventory levels with sufficient flexibility, hence better balancing the information revealed by the censored demand and the reward from (price, inventory) decisions. On the other hand, we assume that the inventory level at each time period is provided \emph{adversarially} by nature, which could impede us from learning the optimal price in the worst-case scenarios. Furthermore, due to the non-stationarity of inventory levels in our setting, the optimal price $p_t^*$ deviates over time. Given this, the search-based methods adopted in \citet{chen2023optimal} are no longer applicable to our problem.


\paragraph{Network Revenue Management (NRM).} NRM \citep{talluri2006theory} studies pricing and allocation of shared resources in a network. In the settings of \citet{besbes2012blind} and subsequent works \citep{simchi2019blind, miao2024demand}, marginal observations for each product can induce adversarial supply due to cross-product resource occupation. We also consider adversarial inventories, but focus on addressing demand censoring and reducing regret via online learning. \citet{perakis2010robust} likewise used regret and considered censoring in demand data, but their minimax/maximin regret definitions differ from ours, and censoring is mainly used in empirical validation rather than for theoretical solutions. Representative NRM works include \citet{gallego1994optimal} on dynamic pricing with stochastic demand, establishing structural monotonicity and asymptotic optimality of simple policies; \citet{talluri1998analysis} showing bid-price controls are near-optimal in large capacities but not strictly optimal; and \citet{meissner2012network} addressing overbooking with product-specific no-shows via a randomized LP.

\section{Problem Setup}
\label{sec:preliminaries}

We have defined the problem setting in \Cref{subsec:problem_overview}. To further clarify the scope of our methodology, we make the following very first assumption before introducing further concepts. 

\begin{assumption}[Linear Demand]
    \label{def:demand}
    Assume the \emph{potential demand} $d_t=d_t(p):=a-bp+N_t$ is \textbf{linear} and \textbf{noisy}. Denote $d(p):=a-bp$ as the \emph{expected potential demand function}. Denote $D_t(p):=\min\{\gamma_t, d_t(p)\}$ as the \emph{censored demand} function.
\end{assumption}
A linear demand model has been widely used in pricing literature, including \citet{lobo2003pricing,van2012models,broder2012dynamic,cohen2021simple}. We highlight two primary reasons: (1) According to \citet{besbes2015surprising}, linear demand allows the prices to converge to the true optimum even under mismatching scenarios. (2) As shown in \citet{keskin2014dynamic}, the same analysis can be applied to Generalized Linear Model (GLM), which has enhanced capability of capturing the real-world demands (see e.g. \citet{bu2022context,wang2021dynamic}).

\subsection{Definitions}
\label{subsec:definitions}
Here we define some key quantities that are involved in the algorithm design and analysis. Firstly, we define distributional functions of the noise $N_t$.
\begin{definition}[Distributional Functions]
    \label{def:cdf_pdf}
    For $N_t$ as the demand noise, denote $F(x)$ as its \emph{cumulative distribution function} (CDF), $x\in\R$. Also, denote the following $G(x)$ as the \emph{integrated CDF}:
    \begin{equation}
        \label{equ:gdf}
        G(x):=\int_{-\infty}^x F(\omega)d\omega, x\in\R
    \end{equation}
    
\end{definition}
We will make more assumptions on the noise distribution later. Notice that we do not assume the existence of PDF for $N_t$. However, if there exists its PDF in specific cases, we will adopt $f(x)$ as a notation. Then, we define the revenue function and the regret.

\begin{definition}[Revenue Function]
    \label{def:revenue}
    Denote $r_t(p)$ as the expected revenue function of price $p$, satisfying
    \begin{equation}
        \label{equ:revenue_function}
        r_t(p) := p\cdot\E_{N_t}[D_t(p)|\gamma_t], p\geq 0.
    \end{equation}
    Also, denote $p_t^*:=\argmax_{p}r_t(p)$ as the optimal price at time $t$.
\end{definition}

\begin{definition}[Regret]
    \label{def:regret}
    Denote
    \begin{equation}
        \label{equ:regret_def}
        Regret:= \sum_{t=1}^T r_t(p_t^*)-r_t(p_t)
    \end{equation}
    as the \emph{cumulative regret} (or \emph{regret}) of the price sequence $\{p_t\}_{t=1}^T$.
\end{definition}
The definition of regret inherits from the tradition of online learning, capturing the \emph{performance difference} between the algorithm-in-use and the best benchmark that an omniscient oracle can achieve (which knows everything except the realization of noise).

\subsection{Assumptions}
\label{subsec:assumptions}
Firstly, we assume boundaries for parameters and price.

\begin{assumption}[Boundedness]
    \label{assumption:boundedness}
    There exist \emph{known} \emph{finite} constants $a_{\max}, b_{\min}, b_{ \max}, \gamma_{\min},$ and $ c >0$ such that $0<a\leq a_{\max}$, $0<b_{\min}\leq b\leq b_{\max}$, $\gamma_t\geq\gamma_{\min}$, $N_t\in[-c, c]$. Also, we restrict the proposed price $p_t$ at any $t=1,2,\ldots, T$ satisfies $0\leq p_t\leq p_{\max}$ with a \emph{known} \emph{finite} constant $p_{\max}>0$.
\end{assumption}

The assumption of boundedness on $a, b$ and price is natural as it defines the scope of instances. We justify the assumption of noise with bounded support $[-c, c]$ from two aspects: (1) The upper bound exists without loss of generality due to the existence of inventory-censoring effect. (2) The lower bound exists as we avoid negative demand. Secondly, we make assumptions on the noise distribution.
\begin{assumption}[Noise Distribution]
    \label{assumption:noise}
    Each $N_t$ is drawn from an \emph{unknown} independent and identical distribution (i.i.d.) satisfying $\E[N_t] = 0$. The CDF $F(x)$ is $L_F$-\emph{Lipschitz}. Also, according to \Cref{assumption:boundedness}, we have $F(-c)=0, F(c)=1$.
\end{assumption}

Thirdly, we make assumptions on the inequality relationships among parameters:

\begin{assumption}[Inequalities of Parameters]
    \label{assumption:inequality}
    The parameters and constants involved in the problem setting satisfy the following conditions:
\begin{table}[h]
    \begin{tabular}{|l|l|l|ll}
    \cline{1-3}
      & Assumption & Explanation                                                                                                                                                                  &  &  \\ \cline{1-3}
    (1)\ & $a-c > \gamma_t, \forall t\in[T]$        & \begin{tabular}[c]{@{}l@{}}The demand at $p=0$ is completely censored by any inventory level,\\ since customers will rush to buy until completely out-of-stock.\end{tabular} &  &  \\ \cline{1-3}
    (2)\ & $\gamma_t > 2c, \forall t\in[T]$   & \begin{tabular}[c]{@{}l@{}}Inventory level should exceed the width of the noise support.\\ Otherwise we can reshape the noise by capping $N_t$ at $\gamma_t-a+bp_{\max}$.\end{tabular}                            &  &  \\ \cline{1-3}
    (3)\ & $a-b p_{\max} - c > 0$       & Demands must be positive.                                                                                                                                                    &  &  \\ \cline{1-3}
    (4)\ & $\gamma_{\min} > a_{\max} -b_{\min} p_{\max} + c$ & \begin{tabular}[c]{@{}l@{}}Demands at $p_t=p_{\max}$ must be uncensored.\\ We denote $\gamma_0:=a_{\max} -b_{\min} p_{\max} + c$ for further use.\end{tabular}                                                         &  &  \\ \cline{1-3}
    (5)\ & $p_{\max}\geq \frac{a}{2b}$       & Optimal price must be included in $[0, p_{\max}]$ without loss of generality.                                                                                                  &  &  \\ \cline{1-3}
    \end{tabular}
\end{table}
\end{assumption}

Each assumption in \Cref{assumption:inequality} is justified by an explanation followed. We make Item (3) to avoid simultaneously upper-and lower-censoring effect, which is beyond the scope of this work and considered as an extension. It is worth mentioning that for those boundedness parameters (e.g. $a_{\max}, b_{\max}, c, \gamma_{\min}$), we have to know their exact values, while we only require the existence of $a, b$ without knowing them. Finally, for the benefit of regret analysis, we assume that the time horizon $T$ is sufficiently large, such that its polynomial will not confound any constant or coefficient.

\begin{assumption}[Large $T$]
    \label{assumption:large_T}
    For any constant $r=O(1)$, time horizon $T$ is larger than any polynomial of parameters, i.e. $T>\Omega\left((\frac{a_{\max}b_{\max}cp_{\max}}{b_{\min}\gamma_{\min}})^r\right)$.
\end{assumption}

\section{Algorithm Design}
\label{sec:algorithm}
In this section, we present our core algorithm,\textbf{C20CB} (see \Cref{algo:C20CB}), which stands for a \emph{Closest-To-Zero Confidence Bound} strategy that proposes asymptotically optimal prices over differentiated inventory levels and censoring effects.  

\begin{algorithm}[htbp]
    \caption{C20CB: Closest-To-Zero Confidence Bound (Main Algorithm)}
    \label{algo:C20CB}
    \begin{algorithmic}[1]
        \STATE {\bfseries Input}: {Parameters $a_{\max}, b_{\min}, b_{\max}, p_{\max}, c$ and self-derived quantities $\tau, \gamma_0, C_a, C_b, C_F, C_G, C_N, C_{\tau}$.}
        \STATE \textbf{//STAGE 1: Pure Exploration for $\tau$ times.}
        \STATE Estimate $(\hat b, \hat a) \leftarrow \mathit{PureExp}(\tau, p_{\max}, \gamma_0)$ according to \textbf{\Cref{algo:pure_exploration_phase}}.
        \STATE \textbf{//STAGE 2: Optimistic Acting}
        \STATE Define $\Delta := (C_a + C_b\cdot p_{\max})\cdot\frac1{\sqrt{\tau}}, M:=\lfloor\frac{c}{2\Delta}\rfloor$ and $w_k:=2k\Delta, k=-M, -M+1, \ldots, M$.
        \STATE Initialize $(F_k, N_k, G_k, \Delta_k)\leftarrow\mathit{ConfBoundInit}(\tau, M, \hat a, \hat b, c, b_{\max}, p_{\max}, C_b, C_F, C_G)$ for $k=-M, \ldots, M$, according to \textbf{\Cref{algo:conf_bound_init}}.
        \FOR{$t=1,2,\ldots, T-\tau-(2M+1)$}
            \IF{$\gamma_t\geq\frac{\hat a + C_a\cdot\frac1{\sqrt{\tau}}}2 + c$}
                \STATE Propose $p_t \leftarrow \frac{\hat a}{2\hat b}$ and continue to $t+1$ (without recording feedback).
            \ELSE
                \STATE Get $(p_t, k_t) \leftarrow \mathit{OptPrice}(M, \gamma_t, \{(F_k, N_k, G_k,\Delta_k)\}_{k=-M}^M, c, \gamma_0)$ according to \textbf{\Cref{algo:opt_price}}.
                \STATE Propose $p_t$ as the price, and observe $D_t$ and $\ind_t:=\ind[D_t\leq \gamma_t]$.
                \IF {$k_t > -\infty$}
                    \STATE Update
                    \begin{equation}
                        \label{eq:updates_DFGN}
                        F_{k_t}\leftarrow\frac{N_{k_t} F_{k_t} + \ind_t}{N_{k_t}+1},\ G_{k_t}\leftarrow\frac{N_{k_t} G_{k_t} + D_t - \gamma_t + c}{N_{k_t}+1},\ \Delta_{k_t}\leftarrow \frac{C_N}{\sqrt{N_{k_t}+1}} + \frac{C_{\tau}}{\sqrt{\tau}}
                    \end{equation}
                    \STATE Update $N_{k_t}\leftarrow N_{k_t} + 1$.
                \ENDIF
            \ENDIF
        \ENDFOR
    \end{algorithmic}
\end{algorithm}

\subsection{Algorithm Design Overview}
\label{subsec:algorithm_design}

Our algorithm has two stages:
\begin{enumerate}
    \item { \textbf{STAGE 1: Exploration}}: During the first $\tau=\sqrt{T}$ rounds, the seller (we) proposes uniformly random prices in the range of $[0, p_{\max}]$. By the end of STAGE 1, we obtain $\hat{a}$ and $\hat{b}$ as plug-in estimators of $a$ and $b$ in the following stage.
    \item { \textbf{STAGE 2: Optimistic Decision}}: We estimate the derivatives of the revenue function at discretized prices $\{p_{k,t}\}$'s. For each $p_{k,t}$, we not only estimate $r'_t(p_{k,t})$ but also maintain an error bar of that estimate. At each time $t$, we propose the price whose corresponding error bar covers $0$ or closest to $0$ if no covering exists.
\end{enumerate}

\Cref{algo:C20CB} exhibits several advantageous properties. It is suitable for processing streaming data as the constructions of $\hat a, \hat b, \hat r'_t(\cdot)$ are updated \emph{incrementally} with each new observation (including $e_{i,t}, D_t, \ind_t$) without the need of revisiting any historical data. Additionally, it consumes $\tilde{O}(T^{\frac54})$ time complexity and $O(T^{\frac14})$ extra space, which are plausible for large $T$. A potential risk of computation might arise on the calculation of $\hat b$, where $\sum_{t=1}^{\tau}e_{1,t}-e_{2,t}$ can be 0 with a small but nonzero probability. Although this event does not undermine the high-probability regret guarantee, it might still be harmful to the computational system for numerical experiments. To mitigate this incident in practice, we may either extend STAGE 1 until one non-zero $e_{1,t}-e_{2,t}=1$ is observed, or restart STAGE 1 at $t=\tau$.


\begin{algorithm}[t]
    \caption{\textit{PureExp}: Pure Exploration}
    \label{algo:pure_exploration_phase}
    \begin{algorithmic}[1]
        \FOR{ $t=1,2,\ldots, \tau$}
            \STATE Sample and propose a price $p_t\sim U[0, p_{\max}]$ uniformly at random.
            \STATE Observe demand $D_t$, and indicators $e_{i,t}$ as defined in \Cref{eq:def_e_i_t} for $i=1,2,3$.
        \ENDFOR
        \STATE Estimate
            \begin{equation}
                \label{eq:a_hat_and_b_hat}
                \begin{aligned}
                    \hat b = & \frac1{4p_{\max}\cdot\frac1{\tau}\sum_{t=1}^\tau\frac{e_{1,t}-e_{2,t}}{\gamma_t - \gamma_0}}, \quad 
                    \hat a =  \frac1{\tau}\sum_{t=1}^{\tau}( \hat b p_{\max}e_{3,t} + \frac{3\gamma_t+\gamma_0}4).
                \end{aligned}
            \end{equation}
    \end{algorithmic}
\end{algorithm}

\begin{algorithm}[t]
    \caption{\textit{ConfBoundInit}: Confidence Bound Initialization}
    \label{algo:conf_bound_init}
    \begin{algorithmic}[1]
        \FOR{$t=1,2,\ldots, 2M+1$}
            \STATE Let $k_t:=-M-1+t$ and propose $p_t=\frac{w_{k_t}-(\gamma_t-\hat a)}{\hat b}$.
            \STATE Observe $D_t$ and $\ind_t:=\ind[D_t<\gamma_t]$.
            \STATE Initialize $F_{k_t}\leftarrow \ind_t, N_{k_t}\leftarrow 1, G_{k_t}\leftarrow D_t-\gamma_t + c, \Delta_{k_t}\leftarrow C_F\cdot b_{\max}p_{\max} + C_G + C_b\cdot\frac1{\sqrt{\tau}}$.
        \ENDFOR
    \end{algorithmic}
\end{algorithm}

\begin{algorithm}[t]
    \caption{\textit{OptPrice}: Select Optimal Price}
    \label{algo:opt_price}
    \begin{algorithmic}[1]
        \STATE Initialize $k_t \leftarrow M$ as the index of arm to inspect, $\rho_t\leftarrow +\infty$ as the smallest absolute value of the derivative estimates we have observed in Time Period $t$ so far.
        \FOR{$k=M, M-1, \ldots, -M+1, -M$}
            \STATE Denote $p_{k, t}:=\frac{w_k-(\gamma_t-\hat a)}{\hat b}$ and $\hat r_{k,t}:=\gamma_0 - c + G_k - \hat b \cdot p_{k,t}\cdot F_k$.
            \IF {$\hat r_{k, t} - \Delta_{k}\leq 0 \leq \hat r_{k, t} + \Delta_{k}$}
                \STATE Update $k_t\leftarrow k, \rho_t\leftarrow 0$, and Break.
            \ENDIF
            \STATE Let $\rho_{k, t}:=\min\{|\hat r_{k, t} - \Delta_{k}|, |\hat r_{k, t} + \Delta_{k}|\}$ as the smallest absolute derivative estimate of arm $k$.
            \IF {$\rho_{k, t} < \rho_t$}
                \STATE Update $\rho_t\leftarrow\rho_{k, t}$ and $k_t\leftarrow k$.
            \ENDIF
        \ENDFOR
        \IF{$\hat r_{k, t}-\Delta_k>0, \forall k=-M, -M+1, \ldots, M-1, M$}
            \STATE Output $p_t\leftarrow\frac{\hat a}{2 \hat b}$ and $k_t \leftarrow -\infty.$
        \ELSE
            \STATE Output $p_t \leftarrow p_{k_t, t}$ and $k_t$.
        \ENDIF
    \end{algorithmic}
\end{algorithm}
\begin{figure}[t]
    \centering
    \includegraphics[width=0.9\linewidth]{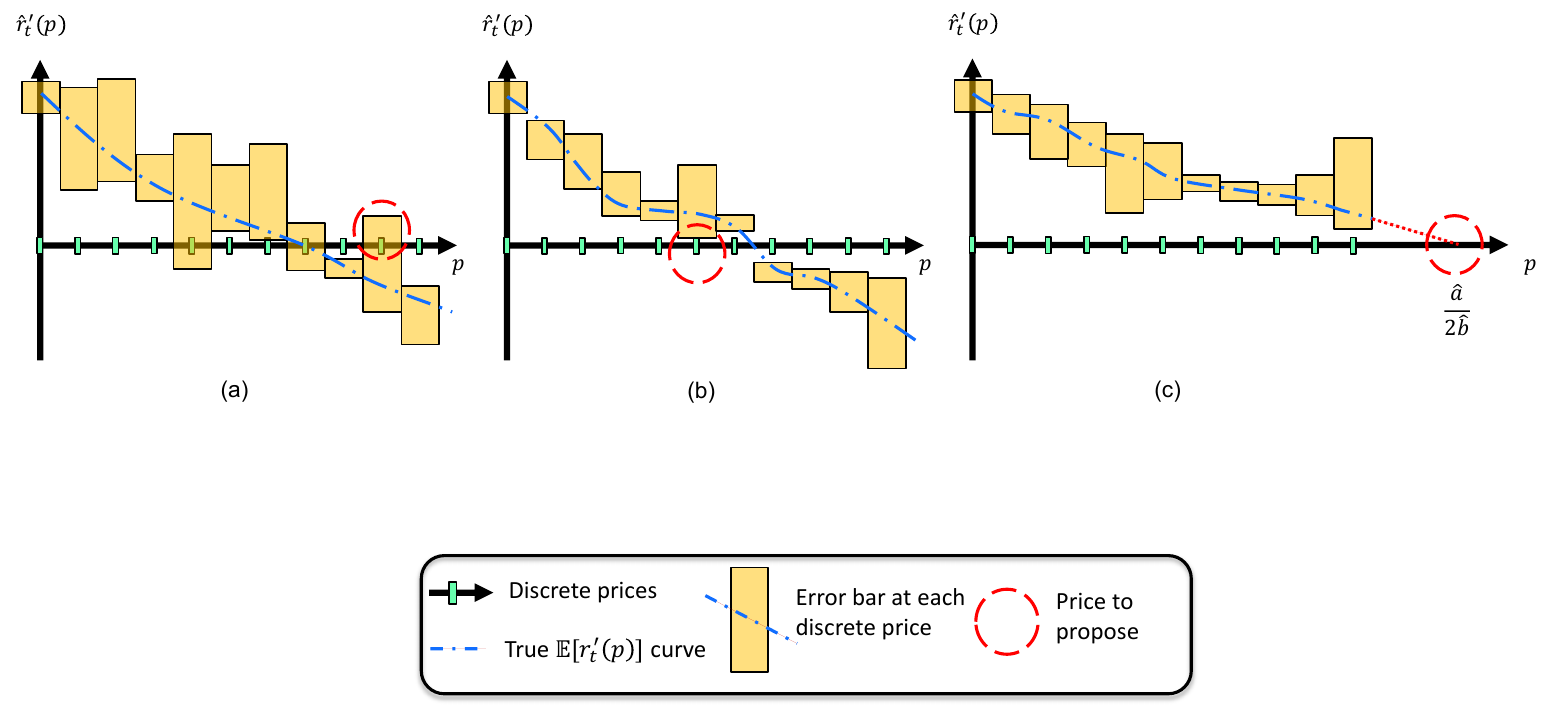}
    \caption{The price C20CB proposes based on confidence bounds of $\hat r_{k,t}$: (a) If there exist prices whose error bar contain $0$, then we propose the largest price among them. (b) If no error bar contains $0$ but there does exist at least one below $0$, we propose the price whose corresponding error bar is closest to $0$. (c) If all error bars are above $0$, we propose $p_t=\frac{\hat a}{2\hat b}$.}
    \label{fig:illustration_c20cb}
\end{figure}

\subsection{Pure-Exploration to Estimate Parameters from Biased Observations}
\label{subsec:pure_exploration}
As shown in \Cref{algo:pure_exploration_phase}, we incorporate a uniform exploration phase to estimate $a$ and $b$, bypassing the obstacle caused by demand censoring. This approach is supported by the following insight: When $Y$ is a uniformly distributed random variable within a closed interval $[L,R]$, and $X$ is another random variable, independent to $Y$ and also distributed within $[L,R]$, we have:
\begin{equation}\label{equ:total_expectation}
    \E[\ind[Y\geq X]]=\Pr[Y\geq X] = \E[\Pr[Y\geq X| X]] = \E[\frac{X-L}{R-L}] = \frac{\E[X]-L}{R-L}.
\end{equation}
Here the second step uses the Law of Total Expectation. \Cref{equ:total_expectation} indicates that we can derive an unbiased estimator of $\E[X]$ through $\ind[Y\geq X]$ even in the absence of any direct observation of $X$. Looking back to our algorithm, denote $\gamma_{i, t}:=\frac{i\gamma_t + (4-i)\gamma_0}4, i=1,2,3$ as the three quarter points of $\gamma_t$ for $t=1,2,\ldots, T$, and define
\begin{equation}
    \label{eq:def_e_i_t}
    \begin{aligned}
        e_{i,t}:=\ind[D_t\geq\gamma_{i,t}], i=1,2,3, t=1,2,\ldots, T.
    \end{aligned}
\end{equation}
When $p_t\sim U[0, p_{\max}]$, we have 
\begin{equation}
\label{equ:e_i_expectation_demo}
\begin{aligned}
    \E[e_{i,t}] & = \E[\ind[a-bp_t+N_t\geq \gamma_{i, t}]] = \E[\E[N_t\geq \gamma_{i, t}-a+bp_t|N_t]]=\E[\frac{N_t-\gamma_{i, t}+a}{bp_{\max}}]=\frac{a-\gamma_{i,t}}{bp_{\max}}.
\end{aligned}
\end{equation}
The last equality comes from $\E[N_t]=0$. By deploying different $\gamma_{i, t}$ at $i=1,2,3$, we can estimate $a$ and $b$ through the observations of $e_{i,t}$ according to \Cref{algo:pure_exploration_phase}, effectively circumventing the censoring effect. A similar technique has been used by \citet{fan2021policy} to construct an unbiased estimator of \emph{valuation} instead of the demand, as we are concerned. However, their application of uniform exploration might be sub-optimal as they adopt an \emph{exploration-then-exploitation} design. In contrast, our algorithm uses this uniform exploration merely as a \emph{trigger} of further learning. Our tight regret bound indicates that uniform exploration can still contribute to an optimal algorithm for a broad range of online learning instances.

\subsection{Optimistic Strategy to Balance Derivatives Estimates v.s. Loss}
\label{subsec:optimistic_strategy}
With $\hat a$ and $\hat b$ established, we have an estimate of the underlying linear demand $d_t(p)=a-bp$. However, we are still unaware of the noise distribution, which is crucial for the current optimal price, since the inventory level $\gamma_t$ partially censors the noise.

In order to balance the learning of noise distribution versus the loss of proposing sub-optimal prices, we apply an \emph{optimistic} strategy in STAGE 2. Usually, an ``optimistic strategy'' chooses actions as if the best-case scenario within its current confidence region holds, thereby encouraging exploration of potentially rewarding options while still balancing risk. 

STAGE 2 involves the following components:
\begin{enumerate}[label=(\roman*)]
    \item We discretize the $[-c,c]$ domain of noise CDF $F(\cdot)$ and its integration $G(\cdot)$ into small intervals of length $2\Delta$, with $\Delta=O(\frac1{T^{1/4}})$. At the center of interval $k$ (which is $2k\Delta$) for each $k=-M, -M+1, \ldots, M$ (with $M:=\lfloor\frac{c}{2\Delta}\rfloor$), we maintain independent estimates of $F$ and $G$, including their expectations and high-confidence error bars.
    \item At each time $t$, we construct a set of discrete prices $\{p_{k,t}\}_{k=-M}^{M}$ such that the quantity $\gamma_t-\hat a+\hat bp_{k,t}$ matches the center of Interval $k$. Given this, we further construct an estimate of each $r'_t(p_{k,t})$ with plug-in estimators $\hat a, \hat b$ and discrete estimators of $F$ and $G$ for the specific Interval $k$. The estimate includes its expectation and error bar (see \Cref{algo:conf_bound_init}).
    \item Since the optimal price $p_t^*$ satisfies $r'_t(p_t^*)=0$, we identify the discrete price $p_{k,t}$ where the derivative estimate is ``possibly $0$'' or ``closest to $0$''. To make this, we design \Cref{algo:opt_price} as a component of C20CB and illustrate the process in \Cref{fig:illustration_c20cb}, which includes the following three cases:
    \begin{enumerate}[label=(\alph*)]
        \item If there exists some $p_{k,t}$ such that the corresponding error bar (of its derivative estimate) contains $0$, we propose the largest price satisfying this condition. 
        \item If there is no $p_{k,t}$ whose corresponding error bar contains $0$ (but there exists an error bar below $0$), we propose the price whose \emph{error bound} is closest to $0$. 
        \item If all error bars are above $0$, indicating that the reward function is monotonically increasing over the ``censoring area'', we propose $p_t=\frac{\hat a}{2\hat b}$ to exploit the non-censoring optimal price $\frac{a}{2b}$. In this case, we do not need to record any observations nor to update any parameter/estimate. 
        
    \end{enumerate}
    \item After proposing the price $p_{k_t, t}$ and observing feedback $D_t$ and $\ind_t$, we update the estimates of $F(\cdot)$ and $G(\cdot)$ for Interval $k_t$ in which $\gamma_t-\hat a+\hat bp_{k_t,t}$ exists.
\end{enumerate}

In a nutshell, we maintain estimates and error bars of $F(\cdot)$ and $G(\cdot)$ at discrete points $2k\Delta$, and map each $2k\Delta$ to a corresponding price $p_{k,t}$ once an inventory $\gamma_t$ occurs. Then we propose the price whose derivative estimate $\hat r_{k,t}\pm\Delta_k$ is \emph{closest-to-zero} among all $k$. Finally, we update the estimates with observations. 

Here we provide an intuition of the optimality: On the one hand, the width of the interval can tolerate the error of mapping from $2k\Delta$ to $p_{k,t}$, and the Lipschitzness of $F$ and $G$ ensures that our estimate within each small interval is roughly correct. On the other hand, we can show that the closest-to-zero derivative estimate implies a \emph{closest-to}-$p_t^*$ price according to some locally strong convexity in the neighborhood of $p_t^*$. As we have smoothness on the regret function, we suffer a quadratic loss at $(T\cdot\frac1{T^{1/4}})^2=O(\sqrt{T})$ cumulatively, which balances the loss of STAGE 1 that costs $O(\tau)=O(\sqrt{T})$ as well. For a rigorous regret analysis, we kindly refer the readers to \Cref{sec:regret_analysis}.

{
\paragraph{Technical Highlights.}
This work is the first to introduce \emph{optimism} on the derivatives and achieve optimal regret in an \emph{adversarial} online learning problem. In contrast, existing works either develop optimistic algorithms on the reward (or loss) function as the original UCB strategy \citep{lai1985asymptotically}, or instead use unbiased stochastic gradients and conduct first-order methods for online optimization \citep{hazan2019introduction}.}

\section{Regret Analysis}
\label{sec:regret_analysis}
\vspace{-0.5em}
In this section, we analyze the cumulative regret of our algorithm and show a $\tilde{O}(\sqrt{T})$ regret guarantee with high probability. Here we only display key lemmas and proof sketches, and we leave all of the proof details to \Cref{app:sec:proof_detail}.

We first state our main theorem.
\begin{theorem}[Regret]
    \label{thm:regret}
    Let $\tau={\sqrt{T}}$ in \Cref{algo:C20CB}. For any adversarial $\{\gamma_t\}_{t=1}^T$ input sequence, C20CB suffers at most $\tilde{O}\left(\sqrt{T}\cdot\log\frac{T}{\delta}\right)$ regret, with probability $\Pr\geq 1-\delta$.
\end{theorem}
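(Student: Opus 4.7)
The natural plan is to decompose the $T$-round regret into the contribution from Stage 1 and the contribution from Stage 2, analyze each separately, and recombine. Let $R_1$ denote the regret of the $\tau=\sqrt{T}$ pure-exploration rounds of \Cref{algo:pure_exploration_phase} together with the $2M+1=O(T^{1/4})$ initialization rounds of \Cref{algo:conf_bound_init}, and let $R_2$ denote the regret of the remaining main loop of \Cref{algo:C20CB}. By \Cref{assumption:boundedness} and \Cref{assumption:inequality} the instantaneous revenue lies in $[0, p_{\max}(a_{\max}+c)]$, so $R_1 = O(\tau + M) = O(\sqrt{T})$ trivially. The technical core is the $\tilde O(\sqrt T)$ bound on $R_2$.

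The first step for $R_2$ is to set up the ``good event''. In Stage 1, the random variables $\{e_{i,t}\}$ are independent across $t$ with expectations linear in $a$ and in $1/b$ via \Cref{equ:e_i_expectation_demo}; a Hoeffding bound combined with the delta method then yields $|\hat a - a| \leq C_a/\sqrt{\tau}$ and $|\hat b - b| \leq C_b/\sqrt{\tau}$ with probability at least $1-\delta/2$, for suitable $C_a, C_b = O(\sqrt{\log(T/\delta)})$. In Stage 2, each per-bin estimator $F_k$ is an empirical mean of the Bernoulli $\ind_t$, whose success probability is $F$ evaluated at a point which by the construction $p_{k,t} = (w_k - \gamma_t + \hat a)/\hat b$ lies within $O(\Delta + 1/\sqrt{\tau})$ of $w_k = 2k\Delta$; Lipschitzness of $F$ (\Cref{assumption:noise}) then implies that $F_k$ concentrates around $F(w_k)$ with sampling error $O(1/\sqrt{N_k})$ and an additive bias of $O(\Delta + 1/\sqrt{\tau})$. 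The same argument applies to $G_k$, so after a union bound over the $O(T^{1/4})$ bins and the $T$ rounds we obtain, on a good event of probability at least $1-\delta$, the uniform confidence bound
\begin{equation}
\bigl|\hat r_{k,t} - r'_t(p_{k,t})\bigr| \;\leq\; \Delta_k \;=\; \frac{C_N}{\sqrt{N_k+1}} + \frac{C_\tau}{\sqrt{\tau}}.
\end{equation}

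Conditioning on this event, the next step is the per-round regret bound. The key structural fact is that in the censored regime, $r'_t(p)$ is strictly decreasing in $p$ with slope bounded below by a positive constant depending on $b_{\min}$ and on the margins in \Cref{assumption:inequality}, so $r_t$ is locally strongly concave at $p_t^*$. Standard optimistic/pessimistic reasoning then shows the chosen $p_{k_t,t}$ satisfies $|r'_t(p_{k_t,t})| = O(\Delta_{k_t})$ in all three cases of \Cref{algo:opt_price}: case (a) covers zero by construction; in case (b) monotonicity of $r'_t$ together with the closest-to-zero selection prevents drifting far from the true root, provided the grid spacing $2\Delta = O(T^{-1/4})$ is finer than the widest confidence interval; and in case (c) all lower bounds are positive, certifying that $p_t^* = a/(2b)$ lies above the censoring threshold and the plug-in price $\hat a/(2\hat b)$ incurs only $O(1/\sqrt{\tau})$ error. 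Quadratic Taylor expansion of $r_t$ at $p_t^*$ converts $|r'_t(p_{k_t,t})| = O(\Delta_{k_t})$ into instantaneous regret $O(\Delta_{k_t}^2)$, and summing gives
\begin{equation}
R_2 \;\leq\; O\!\Bigl(\sum_{k=-M}^{M} \sum_{n=1}^{N_k} \frac{C_N^2}{n+1}\Bigr) \;+\; O\!\Bigl(T\cdot\frac{C_\tau^2}{\tau}\Bigr) \;=\; O(M\log T) \;+\; O(T/\tau),
\end{equation}
which with $\tau = \sqrt{T}$ and $M = O(T^{1/4})$ is $\tilde O(\sqrt T)$, matching the Stage 1 contribution.

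The main obstacle I anticipate is controlling case (b) of \Cref{algo:opt_price}: a priori the closest-to-zero rule could systematically lock onto a bin whose center lies on the ``wrong side'' of the true zero of $r'_t$, and one must argue that whenever case (a) fails, monotonicity of $r'_t$ together with the grid width $2\Delta$ forces the chosen index $k_t$ to be adjacent to the true root, so that the error $|r'_t(p_{k_t,t})|$ is genuinely controlled by $\Delta_{k_t}$ rather than by some distant bin's slack. A closely related subtlety, which I would handle in tandem, is verifying that the plug-in errors in $\hat a, \hat b$ distort both $r'_t(p_{k,t})$ and the bin-to-price mapping by only an additive $O(1/\sqrt{\tau})$, so that the $C_\tau/\sqrt{\tau}$ term in $\Delta_k$ is sufficient uniformly across all $(k,t)$ pairs after the $O(\log T)$ union-bound inflation.
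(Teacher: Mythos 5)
Your proposal follows essentially the same route as the paper's proof: the same Stage 1 / Stage 2 decomposition with the trivial $O(\sqrt{T}+T^{1/4})$ charge for exploration and initialization, the same Hoeffding-based bounds $|\hat a - a|, |\hat b - b| = O(1/\sqrt{\tau})$ (\Cref{lemma:a_b_estimation_error}), the same uniform confidence bound $|\hat r_{k,t}-r'_t(p_{k,t})|\leq C_N/\sqrt{N_k}+C_\tau/\sqrt{\tau}$ with plug-in and grid bias absorbed via Lipschitzness (\Cref{lemma:r_derivative_estimation_error}), the same conversion of closest-to-zero derivatives into quadratic per-round loss through local strong concavity (\Cref{lemma:r_t_p,lemma:c20_to_performance,lemma:corner_case}), and the same harmonic-series summation over bins plus a union bound. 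The obstacle you flag for case (b) is exactly what the paper resolves in \Cref{lemma:c20_to_performance} via the adjacent straddling pair and the Lipschitzness of $r'_t$, matching your sketched argument.
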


\begin{proof}
    \label{proof:thm_regret_sketch}
    In order to prove \Cref{thm:regret}, we have to show the following three components:
    \begin{enumerate}
        \item The reward function $r_t(p)$ is unimodal. Also, $r_t(p)$ is smooth at $p_t^*$, and is strongly concave on a neighborhood of $p_t^*$.
        \item The estimation error of $a$ and $b$ are bounded by $O(\frac1{T^{1/4}})$ at the end of STAGE 1.
        \item The price whose derivative estimate has the closest-to-zero confidence bound is asymptotically close to $p_t^*$.
    \end{enumerate}

In the following, we present each corresponding lemma regarding to the roadmap above.
\begin{lemma}[Revenue Function $r_t(p)$]
    \label{lemma:r_t_p}
    For the expected revenue function $r_t(p)$ defined in \cref{equ:revenue_function}, the following properties hold:
    \begin{enumerate}
        \item We have
        \begin{equation}
            \label{equ:r_p_and_derivatives}
            \begin{aligned}
                r_t(p) & = p(\gamma_t-c+G(c)-G(\gamma_t-a+bp))\\
                r'_t(p) & = \gamma_t-c+G(c)-G(\gamma_t-a+bp)-bp\cdot F(\gamma_t-a+bp).
            \end{aligned}
        \end{equation}
        \item There exists a constant $L_r>0$ such that $r'(p)$ is $L_r$-\emph{Lipschitz}.
        \item $r'(p)$ is monotonically \emph{non-increasing}.
        \item $r_t(p)$ is \emph{unimodal}: There exists a unique $p_t^*\in[0, \frac{a}{b}]$ such that $r'_t(p_t^*)=0$, and $r_t(p)$ monotonically increase in $[0, p_t^*]$ and decrease in $[p_t^*, \frac{a}{b}]$. Notice that $\frac{a}{b} > p_{\max}$ according to \Cref{assumption:inequality}.
        \item $r_t(p)$ is \emph{smooth} at $p_t^*$: There exists a constant $C_s >0$ such that $r_t(p_t^*)-r_t(p)\leq C_s(p_t^*-p)^2, \forall p\in[0, p_{\max}]$.
        \item $r_t(p)$ is \emph{locally strongly concave}: There exist $\epsilon_t>0$ and $C_{\epsilon}>0$ such that $\forall p_1, p_2\in[p_t^*-\epsilon_t, p_t^*+\epsilon_t]$ we have $|r'_t(p_1)-r'_t(p_2)|\geq C_{\epsilon}\cdot|p_1-p_2|$.
        \item There exists a constant $C_v>0$ such that for any $t\in[T]$ and $p\in(p_t^*-\epsilon_t, p_t^*+\epsilon_t)$, we have $|r_t(p_t^*)-r_t(p)|\leq C_v\cdot(r'_t(p))^2$.
    \end{enumerate}
\end{lemma}
\begin{proof}[sketch]
    Property 1 is from integration by parts. Property 2 is proved by the Lipschitzness of $F(x)$. Property 3 is from the monotonicity of $F(x)$. Property 4 can be proved by two steps: (4.1) The existence of $p_t^*\in[0,\frac{a}{b}]$ by $r_t'(0)>0$ and $r_t'(\frac{a}{b})<0$; (4.2) The uniqueness of $p_t^*$ by contradiction. Properties 5 and 6 are mainly from the Lipschitzness of $r'_t(p)$. Property 7 comes from $r'_t(p_t^*) = 0$ and the strong concavity (Property 6).  Please kindly check \Cref{app:subsec:proof_lemma_r_t_p} as a detailed proof of \Cref{lemma:r_t_p}.  \qed
\end{proof}
The properties of $r_t(p)$ and $r'_t(p)$ enable us to upper bound the cost of estimation error and decision bias. In the following, we propose a lemma that serves as a milestone of estimation error upper bounds.

\begin{lemma}[Estimation Error of $a$ and $b$]
    \label{lemma:a_b_estimation_error}
    For any $\eta>0, \delta>0$, with probability $\Pr\geq 1-2\eta\delta$, we have
    \begin{equation}
        \label{equ:error_b_hat_and_a_hat}
        \begin{aligned}
            |\hat b - b| \leq& C_b\cdot\frac1{\sqrt{\tau}}, \quad
            |\hat a - a| \leq C_a\cdot\frac1{\sqrt\tau},
        \end{aligned}
    \end{equation}
    where $C_a:=p_{\max}(C_b + b_{\max}\cdot\sqrt{\frac12\log\frac{2}{\eta\delta}})$ and $C_b:=\frac{8b_{\max}^2}{\gamma_{\min}-\gamma_0}\cdot\sqrt{\frac12\log\frac{2}{\eta\delta}}$.
\end{lemma}
\begin{proof}[sketch]
    The key observation to prove this lemma lies in the expectation of each $e_{i,t}$ that indicates whether the demand exceeds certain level under uniformly distributed prices. According to Law of Total Expectation, for any $\gamma \in[\gamma_{\min}, \gamma_t]$ at any time $t\leq\tau$ in STAGE 1, we have
    \begin{equation}
        \begin{aligned}
            \E[\ind[D_t\geq \gamma]]=&\E_{N_t}[\E_{p_t}[a-bp_t+N_t\geq \gamma|N_t]]\\
            =&\E_{N_t}[\E_{p_t}[p_t\leq\frac{a-\gamma+N_t}{b}|N_t]]\\
            =&\E_{N_t}[\frac{a-\gamma+N_t}{b\cdot p_{\max}}]=\frac{a-\gamma}{bp_{\max}}.
        \end{aligned}
    \end{equation}
    With this property, we may construct method-of-moment estimates from $e_{i,t}$ in STAGE 1, which eliminate the influence of noise distribution and achieve an unbiased estimator of $\frac1b$ (and therefore $\hat b$ asymptotically). 
    With $\hat b$ serving as a plug-in estimator, we later achieve $\hat a$. We obtain the error bounds by applying Hoeffding's Inequalities. \qed 
\end{proof}

We defer the detailed proof of \Cref{lemma:a_b_estimation_error} to \Cref{app:subsec:proof_lemma_a_b_estimation_error}. With the help of \Cref{lemma:a_b_estimation_error}, we may upper bound the estimation error of $r'_t(p)$ at discrete prices. The error bound is displayed as the following lemma.

\begin{lemma}[Estimation Error of $r'_t(p_{k, t})$]
    \label{lemma:r_derivative_estimation_error}
    There exists constants $C_N>0, C_{\tau}>0$ such that for any $t\in[T], k\in\{-M, -M+1, \ldots, M\}$, with probability $\Pr\geq 1-6\eta\delta$ we have
    \begin{equation}
        \label{eq:derivative_estimation_error}
        |r'_t(p_{k,t})-\hat r_{k,t}|\leq C_N\cdot\frac1{\sqrt{N_k(t)}} + C_\tau\cdot\frac1{\sqrt{\tau}} = \Delta_k(t).
    \end{equation}
    Here $N_k(t)$ and $\Delta_k(t)$ denotes the value of $N_k\text{ and }\Delta_k$ at the beginning of time period $t$.
\end{lemma}
\begin{proof}[sketch]
    Denote $N_k(t), F_k(t)$ and $G_k(t)$ as the value of $N_k, F_k$ and $G_k$ at the beginning of time $t$. From Algorithm 1, we have $|\hat r_{k,t} - r'_t(p_{k,t})|\leq|G_k(t)-(G(c)-G(\gamma_t-a+bp_{k,t}))|+p_{k,t}|\hat bF_k-bF(\gamma_t-a+bp_{k,t})|$, and we may separately upper bound each of these two differences. 
    \begin{enumerate}
        \item For the term on $G$, we may split the quantity $|G_k(t)-(G(c)-G(\gamma_t-a+bp_{k,t}))|$ into three terms:
        \begin{enumerate}[label=(\roman*)]
            \item $|G_k(t) - \E[G_k(t)]|$ as concentration, which is bounded by Hoeffding's inequality.
            \item $|\E[G_k(t)] - (G(c) - G(2k\Delta))|$ as the $G_k$ deviation within each ``bin'' of estimation (whose center is $w_k = 2k\Delta$). By definition of $G_k(t)$, we have $\E[G_k(t)] = \frac1{N_k(t)}\sum_{s=1}^{t-1}\ind[k_s==k](G(c) - G(\gamma_s-a+bp_{k_s,s}))$, and therefore we have
            \begin{equation}
                \label{eq:lemma_3_proof_sketch}
                \begin{aligned}
                    &|\E[G_k(t)] - (G(c) - G(2k\Delta))|\\
                    \leq&\frac1{N_k(t)}\sum_{s<t: k_s==k}|G(\gamma_s-a+bp_{k_s,s})-G(2k\Delta)|\\
                    \leq& O(|\hat{a}-a| + |\hat b - b|).
                \end{aligned}
            \end{equation}
            The last line is by Lipschitzness of $G(x)$.
            \item $|G(2k\Delta) - G(\gamma_t - a + bp_{k,t})|$, which is similar to the last step of (ii) above.
        \end{enumerate}
        \item For the term on $F$, we may split the quantity $|\hat bF_k-bF(\gamma_t-a+bp_{k,t})|$ into four terms:
        \begin{enumerate}[label=(\roman*)]
            \item $|\hat b F_k(t) - b F_k(t)|\leq |\hat b - b|$ by the nature of $F_k(t)\leq 1$, and further bounded by \Cref{lemma:a_b_estimation_error}. 
            \item $|F_k(t) - \E[F_k(t)]|$, (iii) $|\E[F_k(t)] - F(2k\Delta)|$, and (iv) $|F(2k\Delta) - F(\gamma_t - a + bp_{k,t})|$, are bounded in the same way presented above for $G$ respectively.
        \end{enumerate}
        By plugging in the estimation error from \Cref{lemma:a_b_estimation_error}, we prove the present lemma. \qed
    \end{enumerate}
\end{proof}
Please refer to \Cref{app:subsec:proof_lemma_r_derivative_estimation_error} as a rigorous proof of \Cref{lemma:r_derivative_estimation_error}. Given this lemma, the derivatives of each discrete price $p_{k,t}$ is truthfully reflected by their corresponding error bound. Therefore, we intuitively see that the closest-to-zero confidence bound represents the closest-to-$p_t^*$ discrete price. We formulate this intuition as the following lemma:

\begin{lemma}[Closest-To-Zero Confidence To Performance]
    \label{lemma:c20_to_performance}
    Denote $\Delta_k(t)$ as the value of $\Delta_k$ at the beginning of period $t$. There exists two constants $N_0>0, N_1>0$ such that for any $t=1,2,\ldots, T$ in STAGE 2, either of the following events occurs with high probability.
    \begin{enumerate}
        \item When $\exists k\in\{-M, -M+1, \ldots, M\}$ such that the Number $k$ confidence bound satisfies $\hat{r}_{k,t}-\Delta_k(t)\leq 0 \leq \hat{r}_{k,t}+\Delta_k(t)$, and also $N_k(t)>N_0$, then we have $p_{k, t}\in[p_t^*-\epsilon_t, p_t^*+\epsilon_t]$. Furthermore, there exists constant $C_{in}$ such that $r_t(p_t^*)-r_t(p_{k,t})\leq C_{in}(\frac1{N_k(t)} + \frac1{\tau})$.
        \item When there exists no confidence bound that contains $0$, i.e. either $\hat{r}_{k,t}-\Delta_k(t)>0$ or $\hat{r}_{k,t}+\Delta_k(t)<0, \forall k\in\{-M, -M+1, \ldots, M-1, M\}$ (happens at least for one $k$), and also $N_k(t)>N_1$, then we have
        \begin{equation}
        \label{eq:lemma_4_event_2_no_confidence_bound_contains_0}
        \inf_{k}\min\{|\hat{r}_{k,t}-\Delta_k(t)|, |\hat{r}_{k,t}+\Delta_k(t)|\}\leq C_{\kappa}\cdot\frac1{\sqrt{\tau}},
        \end{equation}
        
        (where $C_{\kappa}=\frac{L_r(C_a + C_b\cdot p_{\max})}{2b_{\min}}$) and $p_{k, t}\in[p_t^*-\epsilon_t, p_t^*+\epsilon_t]$. Furthermore, there exists constant $C_{out}$ such that $r_t(p_t^*)-r_t(p_{k,t})\leq C_{out}(\frac1{N_k(t)} + \frac1{\tau})$. 
    \end{enumerate}
\end{lemma}
\begin{proof}[sketch]
    The intuition to prove \Cref{lemma:c20_to_performance} is twofold:
\begin{enumerate}
    \item When an error bar contains $0$, the true derivative of the corresponding price is close to $0$ within the distance of its error bound. By applying \Cref{lemma:r_t_p} Property (7), we may upper bound the performance loss with the square of its derivatives, which is further upper bounded by the square of error bound.
    \item When no error bar contains $0$, there exists an adjacent pair of prices whose error bars are separated by $y=0$. On the one hand, their derivatives difference is upper bounded due to the Lipschitzness of $r'_t(p)$. On the other hand, the same derivatives difference is lower bounded by the closest-to-zero confidence bound. Therefore, the gap between $y=0$ and the closest-to-zero confidence bound should be very small, and we still have a comparably small $|r'_t(p_t)|$ if $p_t$ possesses that confidence bound. As a consequence, we have similar upper bound on the performance loss comparing with Case (1), up to constant coefficients. \qed
\end{enumerate}
\end{proof}
The detailed proof of \Cref{lemma:c20_to_performance} is presented in \Cref{app:subsec:proof_lemma_c20_to_performance}. Finally, we have a lemma that upper bounds the regret of proposing $p_t=\frac{\hat a}{2\hat b}$ under special conditions.
\begin{lemma}[Proposing $\frac{\hat a}{2\hat b}$]
    \label{lemma:corner_case}
    When $\gamma_t>\frac{\hat a + C_a\cdot\frac1{\sqrt{\tau}}}2+c$ and when $\hat r_{k,t}-\Delta_k(t)>0, \forall k=-M, -M+1,\ldots, M$, we have $p_t^*=\frac{a}{2b}$ and there exists a constant $C_{non}$ such that
    \begin{equation}
        \label{eq:lemma_corner_case}
        \begin{aligned}
            r_t(\frac{a}{2b})-r_t(\frac{\hat a}{2\hat b}) \leq C_{non}\frac1{\tau}.
        \end{aligned}
    \end{equation}
\end{lemma}

The intuition of \Cref{lemma:corner_case} is that $\frac{a}{2b}$ is the optimal price without censoring, and we only need to show that either the optimal price or $\frac{a}{2b}$ is not censored (which are equivalent as the revenue function is unimodal). We defer its proof to \Cref{app:subsec:proof_lemma_corner_case}. This lemma serves as the last puzzle of the proof. With all lemmas above, we upper bound the overall regret as follows:

\begin{equation}
\label{eq:total_regret_sketch}
    \begin{aligned}
        Regret=\sum_{t=1}^T r_t(p_t^*) - r_t(p)
        &\leq \tau\cdot a_{\max} p_{\max} + (2M+1)(1+N_0+N_1)\cdot a_{\max} p_{\max}\\
        &\quad + \sum_{t=1}^T (\max\{C_{in}, C_{out}\})(\frac1{N_{k_t}(t)} + \frac1{\tau}) +\sum_{t=1}^T C_{non}\cdot\frac1{\tau}\\
        (\text{ let }\tau=\sqrt{T})\rightarrow\quad&=\tilde{O}(\sqrt{T} + T^{\frac14} + \sum_{t=1}^T\frac1{N_{k_t}(t)} + \frac{T}{\tau})\\
        &=\tilde{O}(\sqrt{T} + \sum_{k=1}^{2M+1}\sum_{i_k=1}^{N_{k_t}(T)}\frac1{i_k})\\
        &=\tilde{O}(\sqrt{T} + T^{\frac14}\log T) = \tilde{O}(\sqrt{T}).\\
    \end{aligned}
\end{equation}
Here the first two rows are a decomposition of STAGE 1 (for $\tau$ rounds), STAGE 2 Initialization (for $2M+1$ rounds), STAGE 2 Case (a) and (b) (proposing $p_t=p_{k_t}$, \Cref{lemma:c20_to_performance}) and STAGE 2 Case (c) (proposing $p_t = \frac{\hat a}{2\hat b}$, \Cref{lemma:corner_case}). The fourth row is by re-classification of $\frac1{N_{k_t}(t)}$ according to $k$, which leads to a summation over harmonic series (since each $N_{k_t}(t)$ increases by 1 for the same class $k_t=k$). By applying a union bound, \Cref{eq:total_regret_sketch} holds with probability $$\Pr\geq 1-2\eta\delta - 6\eta\delta\cdot T(2M+1) \geq 1- 20\frac{c}{2(C_a+C_b\cdot p_{\max})}T^{5/4}\cdot \eta\delta.$$
Here the first part comes from \Cref{lemma:a_b_estimation_error}, and the second part comes from \Cref{lemma:r_derivative_estimation_error} for any $t\in[T]$ and $k\in\{-M, \ldots, M\}$. Let $\eta:=\frac{C_a + C_b\cdot p_{\max}}{10c\cdot T^{5/4}}$, and we show that \Cref{thm:regret} holds. \qed
\end{proof}
\begin{remark}
    This $\tilde{O}(\sqrt{T})$ regret upper bound is near-optimal up to $\log{T}$ factors, as it matches the $\Omega(\sqrt{T})$ information-theoretic lower bound proposed by \citet{broder2012dynamic} for a \emph{no-censoring} problem setting with linear noisy demand.
\end{remark}


\section{Discussions}
\label{sec:discussion}
Here we provide some insights on the current limitations and potential extension of this work to a broader field of research.

\paragraph{Extensions to Non-linear Demand Curve.}
In this work, we adopt a linear-and-noisy model for the potential demands, which is standard in dynamic pricing literature. Also, we utilize the unimodal property brought by this linear demand model, even after the censoring effect is imposed. If we generalize our methodologies to nonlinear demand functions, we have to carefully distinguish between potential local optima and saddle points that may also cause $r'_t(p)=0$ for some sub-optimal $p$. We conjecture an $\Omega(T^{\frac{m+1}{2m+1}})$ lower bound in that case, where $m$ is the order of smoothness. It is worth investigating whether the censoring effect will introduce new local optimals or swipe off existing ones in multimodal settings.

\paragraph{Generalization to Unbounded Noises.}
We assume the noise is bounded in a constant-width range. From the analysis in \Cref{sec:regret_analysis}, we know that the threshold of learning the optimal price in our problem setting is still the estimation of parameters. Therefore, this boundedness assumption streamlines the pure-exploration phase, facilitates the estimation of the parameters $b$ and $a$, and scales down the cumulative regret. While our methods and results can be extended to unbounded $O(\frac1{\log T})$-subGaussian noises by simple truncation, challenges remain for handling generic unbounded noises. Moreover, the problem can be more sophisticated with \emph{dual-censoring}, both from above by inventory--as we have discussed-- and from below by $0$, especially when considering unbounded noises.

\paragraph{Extensions to Non-Lipschitz Noise CDF.} In this work, we assume the noise CDF as a Lipschitz function as many pricing-related works did \citep{fan2021policy, tullii2024improved}. This assumption enables the local smoothness at $p_t^*$ and leads to a quadratic loss. However, this prevents us from applying our algorithm to non-Lipschitz settings, which even includes the noise-free setting. In fact, although we believe that a better regret rate exists for the noise-free setting, we have to state that the hardness of the problem is completely different with Lipschitz noises versus without it. Although a Lipschitz noise makes the observation ``more blur'', it also makes the revenue curve ``more smooth''. We would like to present an analog example from the feature-based dynamic pricing problem: When the Gaussian noise $\cN(0, \sigma^2)$ is either negligible (with $\sigma<\frac1T$, see \citet{cohen2020feature_journal}) or super significant (with $\sigma>1$, see \citet{xu2021logarithmic}), the minimax regret is $O(\log T)$. However, existing works can only achieve $O(\sqrt{T})$ regret when $\sigma\in[\frac1T, 1]$. We look forward to future research on our problem setting once getting rid of the Lipschitzness assumption.

\paragraph{Extensions to Contextual Pricing.}
In this work, we assume $a$ and $b$ are static, which may not hold in many real scenarios. \Cref{example_performance} serves as a good instance, showcasing significant fluctuations in popularity across different performances. A reasonable extension of our work would be modeling $a$ and $b$ as \emph{contextual} parameters. Similar modelings have been adopt by \citet{wang2021dynamic} and \citet{ban2021personalized} in the realm of personalized pricing research.

\paragraph{Societal Impacts.}
Our research primarily addresses a non-contextual pricing model that does not incorporate personal or group-specific data, thereby adhering to conventional fairness standards relating to temporal, group, demand and utility discrepancies as outlined by \citet{cohen2022price} and \citet{chen2023utility}. However, the non-stationarity of inventory levels could result in varying \emph{fulfillment rate} over time, i.e. the proportions of satisfied demands at $\{p_t^*\}$'s might be different for $t=1,2,\ldots, T$. This raises concern regarding unfairness in fulfillment rate \citep{spiliotopoulou2022fairness}, particularly for products of significant social and individual importance.

\section{Conclusions}
\label{sec:conclusion}
In this paper, we studied the online pricing problem with adversarial inventory constraints imposed over time series. We introduced an optimistic strategy and a C20CB algorithm that is capable of approaching the optimal prices from inventory-censored demands. Our algorithm achieves a regret guarantee of $\tilde{O}(\sqrt{T})$ with high probability, which is information-theoretically optimal. To the best of our knowledge, we are the first to address this adversarial-inventory pricing problem, and our results indicate that the demand-censoring effect does not substantially increase the hardness of pricing in terms of minimax regret.

\section*{Acknowledgement}
Jianyu Xu started this work as a Ph.D. student at University of California, Santa Barbara. Xi Chen would like to thank the support from NSF via the Grant IIS-1845444.
%
%
%
\bibliographystyle{splncs04}
\bibliography{ref.bib}

\newpage

\appendix
{\Huge Appendix}
\section{More Related Works}
\label{app:subsec:related_works}
Here we discuss more related works as a complement to \Cref{sec:related_works}.

\paragraph{Contextual pricing: Linear valuation and binary-censored demand.}
A surge of research has focused on \emph{feature-based} (or \emph{contextual}) dynamic pricing \citep{cohen2020feature_journal, amin2014repeated, miao2019context, liu2021optimal}.  These works considered situations where each pricing period is preceded by a context, influencing both the demand curve and noise distribution. Specifically, \citet{cohen2020feature_journal, javanmard2019dynamic, xu2021logarithmic} explored a linear valuation framework with known distribution noise, leading to binary customer demand outcomes based on price comparisons to their valuations. Expanding on this, \citet{golrezaei2019incentive, fan2021policy, luo2021distribution, xu2022towards} examined similar models but with unknown noise distributions. In another vein, \citet{ban2021personalized, wang2021dynamic, xu2024pricing} investigated personalized pricing where demand is modeled as a generalized linear function sensitive to contextual price elasticity. Many of these works on valuation-based contextual pricing also assumed a censored demand: The seller only observes a binary feedback determined by a comparison of price with valuation, instead of observing the valuation directly. However, it is important to differentiate between the linear (potential) demand model we assumed and their linear valuation models, and there exists no inclusive relationship to each other. 

\paragraph{Dynamic pricing under constraints.} 
A variety of research works have been devoted to the field of dynamic pricing under specific concerns, which restricts the stages and outcomes of the market. This includes resource allocation \citep{chen2021joint, vera2021online, xu2025joint} and price discrimination \citep{chen2021fairness, cohen2021dynamic, cohen2022price, xu2023doubly, karan2024designing} as two typical instances. In this work, the constraints are twofold: First, it blocks our observations to the real potential demand, leading to biased estimates. Second, it restricts us from fulfilling the potential demand, leading to shifted targets (the optimal prices).

\paragraph{Multi-armed bandits.} Multi-armed bandits (MAB) formalize sequential decision-making under uncertainty via the exploration–exploitation tradeoff, originating in the stochastic setting studied by \citet{lai1985asymptotically}. Modern finite-time analyses led to widely used algorithms such as UCB \citep{auer2002finite}, Thompson Sampling \citep{thompson1933likelihood}, and the EXP-3/4 family \citep{auer2002nonstochastic}. Contextual bandits extend MAB by conditioning rewards on observed features, enabling personalization and decision-making with side information. Representative algorithmic and theoretical foundations include Epoch greedy \citep{langford2007epoch}, LinUCB \citep{chu2011contextual} and Taming-the-monster \citep{agarwal2014taming}. \citet{badanidiyuru2013bandits} introduces a ``Bandits with knapsacks'' model that incorporates resource and budget limitations. A separated stream of ``Continuum-armed bandits'', introduced by \citet{kleinberg2004nearly}, generalize discrete action sets to continuous domains and exploit smoothness/metric structure. Their method is applicable to our problem setting but will lead to a sub-optimal $O(T^{3/5})$ regret. A recent model of ``Generative bandits'' \citep{xu2025online} allows new arms to be generated on the fly (at a certain cost) for free future reuse. From a more general perspective, the stream of works in ``Zeroth-order optimization (ZOO)'' \citep{wang2018stochastic} also belongs to bandits optimization.

\section{Proof Details}
\label{app:sec:proof_detail}

\subsection{Proof of \Cref{lemma:r_t_p}}
\label{app:subsec:proof_lemma_r_t_p}
\begin{proof}
    We prove each property sequentially.
\begin{enumerate}
    \item For $r_t(p)$, we have:
    \begin{equation}
        \label{equ:r_p_calculate}
        \begin{aligned}
            r_t(p)&= \E[p_t\cdot D_t|p_t=p]\\
            &= p\cdot\E[\min\{\gamma_t, a-bp_t+N_t\}|p_t=p]\\
            &=p\cdot\E[\ind[a-bp+N_t\leq\gamma_t]\cdot(a-bp+N_t)+\ind[a-bp+N_t>\gamma_t]\cdot \gamma_t]\\
            &=p\cdot\E[\ind[N_t\leq \gamma_t-a+bp]\cdot(a-bp+N_t) + \ind[N_t >\gamma_t-a+bp]\cdot\gamma_t]\\
            &=p\left(\int_{-c}^{\gamma_t-a+bp}(a-bp+x)f(x)dx + \int_{\gamma_t-a+bp}^c \gamma_t f(x)dx\right)\\
            &=p\left(\int_{-c}^c(a-bp+x)f(x)dx + \int_{\gamma_t-a+bp}^c(\gamma_t-(a-bp+x))f(x)dx\right)\\
            &=p\left((a-bp)\cdot\int_{-c}^cf(x)dx + \int_{-c}^cxf(x)dx\right.\\
            &\quad\left.+(\gamma_t-a+bp)\int_{\gamma_t-a+bp}^cf(x)dx-\int_{\gamma_t-a+bp}^cxf(x)dx\right)\\
            &=p(a-bp)+0+p(\gamma_t-a+bp)(1-F(\gamma_t-a+bp))-p\cdot(xF(x)-G(x))|_{\gamma_t-a+bp}^c\\
            &=p\gamma_t-p(\gamma_t-a+bp)F(\gamma_t-a+bp)\\
            &\quad -p(c-G(c)-F(\gamma_t-a+bp)\cdot(\gamma_t-a+bp)+G(\gamma_t-a+bp))\\
            &=p(\gamma_t-c+G(c)-G(\gamma_t-a+bp)).
        \end{aligned}
    \end{equation}
    
    Here we adopt the notation $f(x)$ as \emph{proximal derivatives} of $F(x)$. According to Rademacher's Theorem (see Section 3.5 of \citet{folland1999real}), given that $F(x)$ is Lipschitz, the measure of $x$ such that $f(x)$ does not exist is zero, hence the integral holds. Here the eighth line comes from $\int_{-c}^c f(x)dx = F(c) - F(-c) = 1$ and $\int_{-c}^{c}xf(x)dx=\E[x]=0$. Given the close form of $r_t(p)$, we derive the form of $r'_t(p)$.
    \\
    \item As we have assumed, $F(x)$ is $L_F$-Lipschitz, and therefore $F(\gamma_t-a+bp)$ is $b_{\max}L_F$-Lipschitz, and $bpF(\gamma_t-a+bp)$ is $(b_{\max} + b_{\max}^2p_{\max}L_F)$-Lipschitz. Also, we have $\frac{d G(\gamma_t-a+bp)}{dp}=b\cdot F(\gamma_t-a+bp)\in[0, b_{\max}]$. Let $L_r:=(2b_{\max} + b_{\max}^2p_{\max}L_F)$, and we know that $r'_t(p)$ is $L_r$-Lipschitz.
    \\
    \item On the one hand, we have
    \begin{equation}
        \label{eq:r'_first_part_non_increasing}
        \begin{aligned}
            \frac{d(\gamma_t - c + G(c)-G(\gamma_t-a+bp))}{d p}=-bF(\gamma_t - a + bp)\leq 0.
        \end{aligned}
    \end{equation}
    On the other hand, for any $\Delta_p>0$, we have
    \begin{equation}
        \label{eq:r'_second_part_non_increasing}
        \begin{aligned}
            &-[b(p+\Delta_p)\cdot F(\gamma_t-a+b(p+\Delta_p))]- [-(bpF(\gamma_t-a+bp))]\\
            =&-b\Delta_pF(\gamma_t-a+b(p+\Delta_p)) + bp(F(\gamma_t-a+bp) - F(\gamma_t-a+b(p+\Delta_p)))\\
            \leq& 0 + 0 = 0.
        \end{aligned}
    \end{equation}
    Since $r'_t(p)=\gamma_t - c + G(c)-G(\gamma_t-a+bp) - bpF(\gamma_t-a+bp)$, we know that both components are monotonically non-increasing.
    \\
    \item We first show the \emph{existence} of $p_t^*\in[0,\frac{a}{b}]$ such that $r'_t(p_t^*)=0$. Recall that $G(c + x) = G(c) + x$ for $\forall x>0$, and $G(c) - G(-c) = \int_{-c}^{c}F(\omega)d\omega\geq0$, and that $\gamma_t>2c>c$ as we assumed. Given those, we have:
    \begin{equation}
        \label{eq:r'_t_0_and_r'_t_a/b}
        \begin{aligned}
            r'_t(0)&=\gamma_t-c+G(c)-G(\gamma_t-a)\\
            &>\gamma_t-c+G(c)-G(-c)\\
            &>0.\\
            r'_t(\frac{a}{b})&=\gamma_t - C + G(c) - G(\gamma_t)-b\cdot\frac{a}{b}\cdot F(\gamma_t)\\
            &=\gamma_t - c + G(c) - G(c+(\gamma_t-c)) - a\cdot 1\\
            &=\gamma_t - c + G(c) - G(c) - (G(c) + (\gamma_t - c)) - a\\
            &=\gamma_t - c + 0 - (\gamma_t - c) - a\\
            &=-a<0.
        \end{aligned}
    \end{equation}
    Also, $r'_t(p)$ is Lipschitz as we proved above. Therefore, $\exists p_t^*\in(0, \frac{a}{b})$ such that $r'_t(p_t^*)=0$.
    \\
    Now we show the uniqueness of $p_t^*$. If there exists $0<p_t^*<q_t^*<\frac{a}{b}$ such that $r'_t(p_t^*) = r'_t(q_t^*)=0$, then it leads to.
    \begin{equation}
        \label{eq:uniqueness_p_t^*}
        \begin{aligned}
            &r'_t(p)\equiv 0, \forall p\in[p_t^*, q_t^*] \text{ due to the monotonicity of }r'_t(p)\\
            \Rightarrow\quad&r'_t(p)\text{ is differentiable in} (p_t^*, q_t^*)\\
            \Rightarrow\quad&r''_t(p)\equiv 0 , \forall p\in (p_t^*, q_t^*)\\
            \Rightarrow\quad&F(\gamma_t-a+bp)\text{ is differentiable with respect to} p \text{ on }(p_t^*, q_t^*)\\
            \Rightarrow\quad&f(\gamma_t-a+bp)\text{ exists on }(p_t^*, q_t^*)\\
            \Rightarrow\quad&-2bF(\gamma_t-a+bp)-b^2pf(\gamma_t-a+bp)\equiv0, \forall p\in(p_t^*, q_t^*)\\
            \Rightarrow\quad&\left\{
            \begin{aligned}
            &F(\gamma_t-a+bp)=0,\text{ and}\\
            &f(\gamma_t-a+bp)=0, \forall p\in(p_t^*, q_t^*)\\
            \end{aligned}
            \right.\\
            \Rightarrow\quad&0\leq F(\gamma_t-a+bp_t^*)\leq \lim_{p\rightarrow q_t^*-}F(\gamma_t-a+bp)=0\\
            \Rightarrow\quad& F(\omega)\equiv 0, \forall\omega<\gamma_t-a+bq_t^*\\
            \Rightarrow\quad&r'_t(p_t^*)=\gamma_t-c+G(c)-G(\gamma_t-a+bp_t^*)-bp_t^*\cdot F(\gamma_t-a+bp_t^*)\\
            &\qquad\quad=\gamma_t-c+G(c)-\int_{-\infty}^{\gamma_t-a+bp_t^*}F(\omega)d\omega-0\\
            &\qquad\quad=\gamma_t-c+G(c)\\
            &\qquad\quad\geq\gamma_t-c\\
            &\qquad\quad>0.
        \end{aligned}
    \end{equation}
    This leads to contradictions that $r'_t(p_t^*)=0$. Therefore, $p_t^*$ is unique. Given this, we know that $r_t(p)$ is unimodal, which increases on $(0, p_t^*)$ and decreases on $(p_t^*, \frac{a}{b})$.
    \\
    \item Since $p_t^*$ is unique, and $r'_t(p)$ is $L_r$-Lipschitz, we have:
    \begin{equation}
        r_t(p_t^*)-r_t(p)\leq\frac{L_r}2\cdot(p_t^*-p)^2, \forall p\in[0, \frac{a}{b}].
    \end{equation}
    \item From the proof of part (4), we know that $F(\gamma_t-a+bp_t^*)>0$ (or otherwise $r'_t(p_t^*)>0$ leading to contradiction). Denote $\epsilon_t:=\frac{F(\gamma_t-a+bp_t^*)}{2L_Fb_{\max}}$, and we have:
    \begin{equation}
        \label{eq:epsilon_t}
        \begin{aligned}
            F(\gamma_t-a+b(p_t^*-\epsilon_t))\geq&F(\gamma_t-a+bp_t^*)-L_F\cdot b\cdot\epsilon_t\\
            \geq&F(\gamma_t-a+bp_t^*)-L_F\cdot b \cdot\frac{F(\gamma_t-a+bp_t^*)}{2L_F\cdot b_{\max}}\\
            =&\frac{F(\gamma_t-a+bp_t^*)}2.
        \end{aligned}
    \end{equation}
    Let $C_{\epsilon}:=\frac{b_{\min}}2\cdot\inf_{\gamma_t\in[\gamma_{\min}, \gamma_{\max}]} F(\gamma_t-a+bp_t^*)$. As $[\gamma_{\min}, \gamma_{\max}]$ is a close set and $F(\gamma_t-a+bp_t^*)$ holds for any $\gamma_t\in[\gamma_{\min}, \gamma_{\max}]$, we know that $C_{\epsilon}>0$ is a universal constant. Given this coefficient, for any $p_1, p_2\in[p_t^*-\epsilon_t, p_t^*+\epsilon_t], p_1<p_2$, we have
    \begin{equation}
        \label{eq:locally_strongly_concave}
        \begin{aligned}
            &r'_t(p_1) - r'_t(p_2)\\
            =&-(G(\gamma_t-a+bp_1)-G(\gamma_t-a+bp_2))-(bp_1F(\gamma_t-a+bp_1)-bp_2F(\gamma_t-a+bp_2))\\
            \geq&-(G(\gamma_t-a+bp_1)-G(\gamma_t-a+bp_2))\\
            =&\int_{\gamma_t-a+bp_1}^{\gamma_t-a+bp_2}F(\omega)d\omega\\
            \geq&\min_{\omega\in[p_t^*-\epsilon_t, p_t^*+\epsilon_t]}F(\omega)\cdot b(p_2-p_1)\\
            \geq&F(\gamma_t-a+b(p_t^*-\epsilon_t))\cdot b(p_2-p_1)\\
            >&\frac{F(\gamma_t-a+bp_t^*)}2\cdot b(p_2-p_1)\\
            \geq&C_{\epsilon}\cdot(p_2-p_1).
        \end{aligned}
    \end{equation}
    Here the third line is because $0<p_1<p_2$ and therefore $0< F(\gamma_t-a+bp_1)\leq F(\gamma_t-a+bp_2)$.
    \\
    \item According to part (6), for any $p\in(p_t^*-\epsilon_t, p_t^*+\epsilon_t)$, we have $|r'_t(p)|=|r'_t(p)-r'_t(p_t^*)|\geq C_{\epsilon}\cdot|p-p_t^*|$. Therefore, we have
    \begin{equation}
        \label{eq:loss_leq_derivative_square}
        \begin{aligned}
            (r'_t(p))^2\geq C_{\epsilon}^2\cdot(p-p_t^*)^2\geq\frac{C_{\epsilon}^2}{\frac{L_r}2}\cdot(r_t(p_t^*)-r_t(p))=\frac{2C_{\epsilon}^2}{L_r}\cdot(r_t(p_t^*)-r_t(p)).
        \end{aligned}
    \end{equation}
    Let $C_v:=\frac{L_r}{2C_{\epsilon}^2}$ and the property is proven.
\end{enumerate}
\end{proof}

\subsection{Proof of \Cref{lemma:a_b_estimation_error}}
\label{app:subsec:proof_lemma_a_b_estimation_error}
\begin{proof}
    Recall that $\gamma_0:=a_{\max} - b_{\min}p_{\max} + c$. Notice that
    \begin{equation}
        \label{eq:e_1_t-e_2_t_fraction_range}
        \begin{aligned}
            \frac{e_{1,t}-e_{2,t}}{\gamma_t - \gamma_0}\in[0, \frac1{\gamma_{\min}-\gamma_0}], e_{3,t}\in[0,1].
        \end{aligned}
    \end{equation}
    Also, for any $t=1,2,\ldots, \tau$ in STAGE 1, and any $\gamma$ such that $\gamma_{\min}\leq \gamma_t$, we have
    \begin{equation}
        \label{eq:expectation_indicator_primary}
        \begin{aligned}
            \E[\ind[D_t\geq \gamma]]=&\E_{N_t}[\E_{p_t\sim U[0, p_{\max}]}[a-bp_t+N_t\geq \gamma|N_t]]\\
            =&\E_{N_t}[\E_{p_t\sim U[0, p_{\max}]}[p_t\leq\frac{a-\gamma+N_t}{b}|N_t]]\\
            =&\E_{N_t}[\frac{a-\gamma+N_t}{b\cdot p_{\max}}]\\
            =&\frac{a-\gamma}{bp_{\max}}.
        \end{aligned}
    \end{equation}
    Here the first row is due to Law of Total Expectation, and the last row is due to the zero-mean assumption of $N_t$ (see \Cref{assumption:noise}). Given this equation, we have:
    \begin{equation}
        \label{eq:expectation_e_i_t_results}
        \begin{aligned}
            \E[\frac{e_{1,t}-e_{2,t}}{\gamma_t-\gamma_0}]&=\frac{\frac{\frac{2\gamma_t + 2\gamma_0}4 - \frac{1\gamma_t + 3\gamma_0}4}{bp_{\max}}}{\gamma_t - \gamma_0}=\frac{\frac14\cdot(\gamma_t-\gamma_0\cdot\frac1{bp_{\max}})}{\gamma_t-\gamma_0}=\frac1{4bp_{\max}}.\\
            \E[e_{3,t}]&=\frac{a-\frac{3\gamma_t+\gamma_0}4}{bp_{\max}}.
        \end{aligned}
    \end{equation}
    According to Hoeffding's Inequality, we have with $\Pr\geq 1-\eta\delta$:
    \begin{equation}
        \label{eq:hoeffding_b}
        \begin{aligned}
            |\frac1{\tau}\sum_{t=1}^{\tau}\frac{e_{1,t}-e_{2,t}}{\gamma_t - \gamma_0} - \frac1{4bp_{\max}}|\leq\frac1{\gamma_{\min}-\gamma_0}\sqrt{\frac12\log\frac{2}{\eta\delta}}\cdot\frac1{\sqrt{\tau}}.
        \end{aligned}
    \end{equation}
    Based on this concentration, we upper bound the estimation error between $b$ and $\hat b$ by the end of STAGE 1:
    \begin{equation}
        \label{eq:error_b}
        \begin{aligned}
            |\hat b - b|&=|\frac1{4p_{\max}\cdot\frac1{\tau}\sum_{t=1}^{\tau}\frac{e_{1,t}-e_{2,t}}{\gamma_t - \gamma_0}} - b|\\
            &=|\frac1{4p_{\max}(\frac1{\tau}\sum_{t=1}^{\tau}\frac{e_{1,t}-e_{2,t}}{\gamma_t - \gamma_0} - \frac1{4bp_{\max}}) + \frac1b}-\frac1{\frac1b}|\\
            &=|\frac{4p_{\max}\cdot\frac1{\tau}\sum_{t=1}^{\tau}\frac{e_{1,t}-e_{2,t}}{\gamma_t - \gamma_0}\cdot\frac1{4bp_{\max}}}{4p_{\max}\cdot\frac1{\tau}\sum_{t=1}^{\tau}\frac{e_{1,t}-e_{2,t}}{\gamma_t - \gamma_0}}|\\
            &\leq\frac{4p_{\max}\frac1{\gamma_{\min}-\gamma_0}\sqrt{\frac12\log\frac2{\eta\delta}}\cdot\frac1{\sqrt{\tau}}}{4p_{\max}(\frac1{4bp_{\max}}-\frac1{\gamma_{\min}-\gamma_0}\sqrt{\frac12\log\frac2{\eta\delta}}\cdot\frac1{\sqrt{\tau}})\cdot\frac1b}\\
            &\leq\frac{\frac1{\gamma_{\min}-\gamma_0}\sqrt{\frac12\log\frac2{\eta\delta}}\cdot\frac1{\sqrt{\tau}}}{\frac1{8bp_{\max}}\cdot\frac1b}\\
            &\leq\frac{4b_{\max}^2p_{\max}}{\gamma_{\min} - \gamma_0}\cdot\sqrt{\frac12\log\frac2{\eta\delta}}\cdot\frac1{\sqrt{\tau}}.
        \end{aligned}
    \end{equation}
    Here the fifth row requires $\frac1{8bp_{\max}}\geq \frac1{\gamma_{\min}-\gamma_0}\sqrt{\frac12\log\frac2{\eta\delta}}\cdot\frac1{\sqrt{\tau}}$, which further requires $T\geq (\frac{8bp_{\max}}{\gamma_{\min}-\gamma_0})^4\cdot\frac14\log^2\frac2{\eta\theta}$. According to \Cref{assumption:large_T}, we know that this inequality holds. Denote $C_b:=\frac{4b_{\max}^2p_{\max}}{\gamma_{\min} - \gamma_0}\cdot\sqrt{\frac12\log\frac2{\eta\delta}}$ and we have $|\hat b - b|\leq C_b\cdot\frac1{\sqrt{\tau}}$ with high probability.

    Again, according to Hoeffding's Inequality, we have with $\Pr\geq 1-\eta\delta$:
    \begin{equation}
        \label{eq:hoeffding_a}
        \begin{aligned}
            |\frac1\tau\sum_{t=1}^\tau e_{3,t} - \frac{a-\frac{3\gamma_t + \gamma_0}4}{bp_{\max}}|\leq\sqrt{\frac12\log\frac2{\eta\delta}}\cdot\frac1{\sqrt{\tau}}.
        \end{aligned}
    \end{equation}
    Hence we have
    \begin{equation}
        \label{eq:error_a}
        \begin{aligned}
            |\hat a - a|=&|\frac1\tau\sum_{t=1}^\tau(\hat b p_{\max}e_{3,t} + \frac{3\gamma_t + \gamma_0}4) - a|\\
            =&|(\hat b - b)p_{\max}\cdot\frac1{\tau}\sum_{t=1}^\tau e_{3,t} + \frac1{\tau}\sum_{t=1}^\tau(b p_{\max}e_{3,t} - (a-\frac{3\gamma_t + \gamma_0}4))|\\
            \leq & |\hat b - b|p_{\max}\cdot 1 + |\frac1{\tau}\sum_{t=1}^\tau bp_{\max}e_{3,t} - (a-\frac{3\gamma_t + \gamma_0}4)|\\
            \leq & p_{\max}|\hat b - b| + bp_{\max}\cdot\sqrt{\frac12\log\frac2{\eta\delta}}\cdot\frac1{\sqrt{\tau}}\\
            =&p_{\max}\cdot C_b\cdot p_{\max}\frac1{\sqrt{\tau}} + bp_{\max}\sqrt{\frac12\log\frac2{\eta\delta}}\cdot\frac1{\sqrt{\tau}}\\
            \leq&p_{\max}(C_b + b_{\max}\sqrt{\frac12\log\frac2{\eta\delta}})\cdot\frac1{\sqrt{\tau}}.
        \end{aligned}
    \end{equation}
    Denote $C_a:=p_{\max}(C_b + b_{\max}\sqrt{\frac12\log\frac2{\eta\delta}})$ and the lemma is proven.
\end{proof}

\subsection{Proof of \Cref{lemma:r_derivative_estimation_error}}
\label{app:subsec:proof_lemma_r_derivative_estimation_error}
\begin{proof}
    Here we consider the time periods before time $t$, and we use an index $s$ to denote each time period $s=1,2,\ldots, t-1, t$. As a consequence, we have the notations $D_s, \gamma_s, k_s, \ind_s$ corresponding to $D_t, \gamma_t, k_t, \ind_t$ as we defined in \Cref{sec:preliminaries} and \Cref{sec:algorithm}. Also, we denote $N_k(t), F_k(t)$ and $G_k(t)$ as the value of $N_k, F_k$ and $G_k$ at the beginning of time period $t$.

    From \Cref{algo:C20CB}, we have
    \begin{equation}
        \label{eq:r'_estimate_decomposition_1}
        \begin{aligned}
            |\hat r_{k,t} - r'_t(p_{k,t})|\leq&|G_k(t)-(G(c)-G(\gamma_t-a+bp_{k,t}))|+p_{k,t}|\hat bF_k-bF(\gamma_t-a+bp_{k,t})|.
        \end{aligned}
    \end{equation}
    Notice that $G_k(t)=\frac1{N_k(t)}\cdot\sum_{s=1}^{t-1}\ind[k_s==k]\cdot(D_s-\gamma_s+c)$. Also, for each $D_s$ on the price $p_{k_s, s}$, we have
    \begin{equation}
        \label{eq:expectation_d_s}
        \begin{aligned}
            \E[D_s|p_{k_s, s}=\frac{W_k-(\gamma_s-\hat a)}{\hat b}]=(\gamma_t-c+G(c)-G(\gamma_s-a+bp_{k_s,s})).
        \end{aligned}
    \end{equation}
    Recall that $W_k=2k\Delta$. Hence we have
    \begin{equation}
        \label{eq:expectation_g_k_t}
        \begin{aligned}
            \E[G_k(t)]=&\frac1{N_k(t)}\sum_{s=1}^{t-1}\ind[k_s==k]\cdot\E[D_s-\gamma_s+c]\\
            =&\frac1{N_k(t)}\sum_{s=1}^{t-1}\ind[k_s==k](G(c)-G(\gamma_s-a+b\cdot\frac{2k\Delta-\gamma_s+\hat a}{\hat b})).
        \end{aligned}
    \end{equation}
    Also, due to the fact that $G(x)$ is $1$-Lipschitz, we have
    \begin{equation}
        \label{eq:g_k_estimation_error}
        \begin{aligned}
            &|G(\gamma_s-a+b\cdot\frac{2k\Delta-\gamma_s+\hat a}{\hat b}) - G(2k\Delta)|\\
            =&|G(\gamma_s-a+b\cdot\frac{2k\Delta-\gamma_s+\hat a}{b} +b(\frac1{\hat b}-\frac1b)(2k\delta-\gamma_s+\hat a)) - G(2k\Delta)|\\
            =&|G(\gamma_s-a+2k\Delta-\gamma_s+\hat a +\frac{b-\hat b}{\hat b}\cdot(2k\Delta-\gamma_s+\hat a)) - G(2k\Delta)|\\
            =&|G(2k\Delta + (\hat a - a) + (b-\hat b)\cdot\frac{2k\Delta-\gamma_s+\hat a}{\hat b}) - G(2k\Delta)|\\
            \leq&|2k\Delta + (\hat a - a) + (b-\hat b)\cdot\frac{2k\Delta-\gamma_s+\hat a}{\hat b}-2k\Delta|\\
            \leq&|\hat a-a| + |b-\hat b|\cdot\frac{|2k\Delta-\gamma_s+\hat a|}{\hat b}\\
            <&(C_a + C_b\cdot\frac{c+a_{\max}}{b_{\min}})\cdot\frac1{\sqrt{\tau}}.
        \end{aligned}
    \end{equation}
    Therefore, we know that
    \begin{equation}
        \label{eq:g_k_bias}
        \begin{aligned}
            |\E[G_k(t)]-(G(c)-G(2k\Delta))|\leq&\frac1{N_k(t)}\sum_{s=1}^{t-1}\ind[k_s==k]|G(\gamma_s - a + b\cdot\frac{2k\Delta - \gamma_s + \hat a}{\hat b}) - G(2k\Delta)|\\
            \leq&\frac1{N_k(t)}\sum_{s=1}^{t-1}\ind[k_s==k](C_a+C_b\cdot\frac{c+a_{\max}}{b_{\min}})\cdot\frac1{\sqrt{\tau}}\\
            \leq&(C_a+C_b\cdot\frac{c+a_{\max}}{b_{\min}})\cdot\frac1{\sqrt{\tau}}.
        \end{aligned}
    \end{equation}
    Also, since each $D_s-\gamma_s + c < a_{max} + c$, according to Hoeffding's Inequality, with $\Pr\geq 1- \eta\delta$ we have
    
    \begin{equation}
        \label{eq:hoeffding_g_k}
        \begin{aligned}
            &|G_k(t)-(G(c)-G(2k\Delta))|\\
            \leq&|G_k(t)-\E[G_k(t)]| + |\E[G_k(t)]-(G(c)-G(2k\Delta))|\\
            \leq&(a_{\max}+c)\cdot\sqrt{\frac12\log\frac2{\eta\delta}}\frac1{\sqrt{N_k{t}}} + (C_a+C_b\cdot\frac{c+a_{\max}}{b_{\min}})\cdot\frac1{\sqrt{\tau}}.
        \end{aligned}
    \end{equation}
    On the other hand, since
    \begin{equation}
        \label{eq:expectation_f_k}
        \begin{aligned}
            \E[F_k(t)]=&\frac1{N_k(t)}\sum_{s=1}^{t-1}\ind[k_s==k]\E[\ind[D_s<\gamma_s]]\\
            =&\frac1{N_k(t)}\sum_{s=1}^{t-1}\ind[k_s==k]\cdot F(\gamma_s-a+b\cdot\frac{2k\Delta-\gamma_s+\hat a}{\hat b}).
        \end{aligned}
    \end{equation}
    Similar to \Cref{eq:g_k_estimation_error}, since $F(x)$ is $L_F$-Lipschitz, we have
    \begin{equation}
        \label{eq:f_k_estimation_error}
        \begin{aligned}
            &|F(\gamma_s-a+b\cdot\frac{2k\Delta-\gamma_s+\hat a}{\hat b}) - F(2k\Delta)|\\
            =&|F(\gamma_s-a+b\cdot\frac{2k\Delta-\gamma_s+\hat a}b + b(\frac1{\hat b}-\frac1b)(2k\Delta-\gamma_s+\hat a))-F(2k\Delta)|\\
            =&|F(\gamma_s-a+2k\Delta-\gamma_s+\hat a+\frac{b-\hat b}{\hat b}(2k\Delta-\gamma_s+\hat a))-F(2k\Delta)|\\
            =&|F(2k\Delta+(\hat a-a) + (b-\hat b)\cdot\frac{2k\Delta-\gamma_s+\hat a}{\hat b})-F(2k\Delta)|\\
            \leq&L_F\cdot(|\hat a - a| + |b - \hat b|\cdot\frac{2k\Delta-\gamma_s+\hat a}{\hat b})\\
            <&L_F\cdot(C_a + C_b\cdot\frac{c+a_{\max}}{b_{\min}})\cdot\frac1{\sqrt{\tau}}.
        \end{aligned}
    \end{equation}
    Therefore, we have
    \begin{equation}
        \label{eq:f_k_bias}
        \begin{aligned}
            |\E[F_k(t)-F(2k\Delta)]|\leq L_F\cdot(C_a + C_b\cdot\frac{c+a_{\max}}{b_{\min}})\cdot\frac1{\sqrt{\tau}}.
        \end{aligned}
    \end{equation}

    Also, since $\ind_s=\ind[D_s<\gamma_s]$, according to Hoeffding's inequality, with $\Pr\geq1-\eta\delta$ we have
    \begin{equation}
        \label{eq:hoeffding_f_k}
        \begin{aligned}
            &|F_k(t)-F(2k\Delta)|\\
            \leq&|F_k(t)-\E[F_k(t)]| + |\E[F_k(t)] - F(2k\Delta)|\\
            \leq&1\cdot\sqrt{\frac12\log\frac2{\eta\delta}}\cdot\frac1{N_k(t)} + L_F(C_a + C_b\cdot\frac{c+a_{\max}}{b_{\min}})\cdot\frac1{\sqrt{\tau}}.
        \end{aligned}
    \end{equation}
    As a consequence, we have bounded the estimation error of $\hat r_{k,t}$ from $r'_t(p_{k,t})$:
    \begin{equation}
        \label{eq:r_derivatives_estimation_error_final}
        \begin{aligned}
            &|r'_t(p_{k,t})-\hat r_{k,t}|\\
            \leq&|\gamma_t - c + G_k(t)-\hat b p_{k,t} F_k(t) - (\gamma_t-c+G(c)-G(2k\Delta)-b p_{k,t} F(2k\Delta))|\\
            &+ |G(2k\Delta) - G(\gamma_t-a+b\cdot\frac{2k\Delta-(\gamma_t-\hat a)}{\hat b})| \\
            &+ bp_{k,t}|F(2k\Delta) - F(\gamma_t-a+b\cdot\frac{2k\Delta-(\gamma_t-\hat a)}{\hat b})|\\
            \leq&|G_k(t)-(G(c)-G(2k\Delta))| + |\hat b - b|\cdot p_{k,t}F_k + bp_{k,t}|F_k(t)-F(2k\Delta)|\\
            &+ |G(2k\Delta)-G(\gamma_t-a+b\cdot\frac{2k\Delta-(\gamma_t-\hat a)}{\hat b})| \\
            &+ b_{\max}p_{\max}\cdot|F(2k\Delta)-F(\gamma_t-a+b\cdot\frac{2k\Delta - (\gamma_t-\hat a)}{\hat b})|\\
            \leq&(c+a_{\max})\sqrt{\frac12\log\frac2{\eta\delta}}\cdot\frac1{\sqrt{N_k(t)}} + (C_a + C_b\cdot\frac{c+a_{\max}}{b_{\min}})\cdot\frac1{\sqrt{\tau}} + p_{\max}C_b\cdot\frac1{\sqrt{\tau}}\\
            &+ b_{\max}p_{\max}(\sqrt{\frac12\log\frac2{\eta\delta}}\cdot\frac1{\sqrt{N_k(t)}} + L_F(C_a + C_b\cdot\frac{c+a_{\max}}{b_{\min}})\cdot\frac1{\sqrt{\tau}})\\
            &+ (C_a + C_b\cdot\frac{c+a_{\max}}{b_{\min}})\cdot\frac1{\sqrt{\tau}} + b_{\max}p_{\max}L_F(C_a + C_b\cdot\frac{c+a_{\max}}{b_{\min}})\cdot\frac1{\tau}\\
            =:&C_N\cdot\frac1{\sqrt{N_k(t)}} + C_{\tau}\cdot\frac1{\sqrt{\tau}}
            =\Delta_k.
        \end{aligned}
    \end{equation}
    Here 
    \begin{equation}
        \label{eq:def_c_n_c_tau}
        \begin{aligned}
            C_N&:=(c+a_{\max} + b_{\max}p_{\max})\cdot\sqrt{\frac12\log\frac2{\eta\delta}},\\
            C_{\tau}&:=2(b_{\max}p_{\max}L_F + 1)(C_a + C_b\cdot\frac{c+a_{\max}}{b_{\min}}) + p_{\max}C_b.
        \end{aligned}
    \end{equation}
    Finally, we apply the union bound on the probability, and know that \Cref{eq:r_derivatives_estimation_error_final} holds with probability $\Pr\geq 1- 6\eta\delta$.
\end{proof}

\subsection{Proof of \Cref{lemma:c20_to_performance}}
\label{app:subsec:proof_lemma_c20_to_performance}
\begin{proof}
    We first prove the lemma under Case 1 when some confidence bound contains $0$. Denote $\rho_t:=\min\{|r'_t(p_t^*-\epsilon_t)|, |r'_t(p_t^*+\epsilon_t)|\}$, and we know that $\rho_t>0$ due to the uniqueness of $p_t^*$.

    Now, let $N_0:=\frac{36C_N^2}{\rho_t^2}$, where $C_N$ is the constant coefficient define in \Cref{lemma:r_derivative_estimation_error}. Given this, when $N_k(t)\geq N_0$, we have
    \begin{equation}
        \label{eq:delta_k_t_bounded_by_rho_part_1}
        \begin{aligned}
            C_N\cdot\frac1{\sqrt{N_k(t)}}&\leq C_N\frac1{\sqrt{N_0}}\\
            &=C_N\cdot\frac1{\frac{6C_N}{\rho_t}}\\
            &=C_N\cdot\frac{\rho_t}{6C_N}=\frac{\rho_t}6.
        \end{aligned}
    \end{equation}
    Also, since $T$ is assumed as larger than any constant (see \Cref{assumption:large_T}), we have $T\geq\frac{1296C_{\tau}^4}{\rho_t^4}$, and therefore
    \begin{equation}
        \label{eq:delta_k_t_bounded_by_rho_part_2}
        \begin{aligned}
            C_{\tau}\cdot\frac1{\sqrt{\tau}}&= C_{\tau}\frac1{T^{1/4}}\\
            &\leq C_{\tau}\cdot\frac1{\frac{6C_{\tau}}{\rho_t}}\\
            &=C_N\cdot\frac{\rho_t}{6C_{\tau}}=\frac{\rho_t}6.
        \end{aligned}
    \end{equation}
    Given \cref{eq:delta_k_t_bounded_by_rho_part_1} and \cref{eq:delta_k_t_bounded_by_rho_part_2}, we know that $\Delta_k(t) = C_N\cdot\frac1{\sqrt{N_k(t)}} + C_{\tau}\cdot\frac1{\sqrt{\tau}}\leq\frac{\rho_t}6 + \frac{\rho_t}6 = \frac{\rho_t}3$. Now, if $0\in[\hat r_{k,t}-\Delta_k(t), \hat r_{k,t}+\Delta_k(t)]$, we have $|\hat r_{k,t}|\leq\Delta_k(t)$ and therefore
    \begin{equation}
        \label{eq:case_1_derivative_upper_bound_by_delta_k_t}
        \begin{aligned}
            |r'_t(p_{k,t})|\leq|r'_t(p_{k,t}) - \hat r_{k,t}| + |\hat r_{k,t} - 0| \leq \Delta_k(t) + \Delta_k(t) \leq \frac{\rho_t}3 + \frac{\rho_t}3 = \frac{2\rho_t}3.
        \end{aligned}
    \end{equation}
    Since $r'_t(p)$ is monotonically non-increasing, any $p\in[0, p_{\max}]$ satisfying $r'_t(p)<\rho_t$ should satisfy $p\in(p_t^*-\epsilon_t, p_t^*+\epsilon_t)$. Therefore, we have $p_{k,t}\in(p_t^*-\epsilon_t, p_t^*+\epsilon_t)$. According to \Cref{lemma:r_t_p} Property (7), we have:
    \begin{equation}
        \label{eq:per_round_regret_bound_case_1}
        \begin{aligned}
            |r_t(p_t^*) - r_t(p_{k,t})|\leq&C_v\cdot(r'_t(p_{k,t}))^2\\
            \leq&C_v\cdot(2\Delta_k(t))^2\\
            \leq&4C_v(C_N\cdot\frac1{\sqrt{N_k(t)}} + C_{\tau}\cdot\frac1{\sqrt{\tau}})^2\\
            \leq&8C_v(C_N^2\cdot\frac1{N_k(t)} + C_{\tau}^2\cdot\frac1{\tau}).
        \end{aligned}
    \end{equation}
    Let $C_{in}:=8C_v\max\{C_N^2, C_{\tau}^2\}$ and the first part of \Cref{lemma:c20_to_performance} holds.
    \\
    Now let us prove the lemma under Case 2 when \emph{no} confidence bound contains 0. Formally stated, we have
    \begin{equation}
        \label{eq:c20_to_performance_case_2_formally}
        \begin{aligned}
            (\hat r_{k,t} + \Delta_k(t))(\hat r_{k,t} - \Delta_k(t))>0, \forall k=-M, -M+1, \ldots, M-1, M.
        \end{aligned}
    \end{equation}
    
    Denote $\theta_t:=\inf_k\min\{|\hat r_{k,t} + \Delta_k(t)|, |\hat r_{k,t} - \Delta_k(t)|\}$, and we know that $\min\{|\hat r_{k_t,t} + \Delta_k(t)|, |\hat r_{k_t,t} - \Delta_k(t)|\}\theta_t>0$, where $k_t$ is the $k$ such that $p_{k, t}$ is proposed at time $t$. Therefore, we have $|\hat r_{k,t} + \Delta_k(t)|\geq\theta_t$ and $|\hat r_{k,t} - \Delta_k(t)|\geq\theta_t, \forall k$. According to the prerequisite of \Cref{lemma:c20_to_performance} Part (2), there exists $k_0$ such that
    \begin{equation}
        \label{eq:k_0_+_-}
        \left\{
        \begin{aligned}
           &\hat r_{k_0,t} - \Delta_{k_0}(t) \geq \theta_t\\
           &\hat r_{k_0+1, t} + \Delta_{k_0+1}(t) \leq \theta_t.
        \end{aligned}
        \right.
    \end{equation}
    Also, since $r'_t(p)$ is $L_r$-Lipschitz, we have
    \begin{equation}
        \label{eq:theta_t_upper_bound}
        \begin{aligned}
            |r'_t(p_{k_0, t}) - r'_t(p_{k_0+1, t})|&\leq L_r(p_{k_0+1, t} - p_{k_0, t}) = L_r\frac{2\Delta}{\hat b}\\
            \Rightarrow\quad 2L_r\frac{\Delta}{\hat b}&\geq|r'_t(p_{k_0, t}) - r'_t(p_{k_0+1, t})|\geq 2\theta_t\\
            \Rightarrow\quad \theta_t&\leq\frac{L_r}{\hat b}\cdot\Delta\leq\frac{L_r}{b_{\min}}(C_a + C_bp_{\max})\cdot\frac1{T^{1/4}}.
        \end{aligned}
    \end{equation}
    As $T$ is sufficiently large, we have $\frac{L_r}{b_{\min}}(C_a + C_bp_{\max})\cdot\frac1{T^{1/4}}\leq\frac16\cdot\rho_t$ where $\rho_t:=\min\{|r'_t(p_t^*-\epsilon_t)|, |r'_t(p_t^*+\epsilon_t)|\}$. Let $N_1:=\frac{144C_N^2}{\rho_t^2}$. Similar to \Cref{eq:delta_k_t_bounded_by_rho_part_1} and \Cref{eq:delta_k_t_bounded_by_rho_part_2}, we have
    \begin{equation}
        \label{eq:delta_k_t_bounded_by_rho_similar}
        \begin{aligned}
            C_N\cdot\frac1{\sqrt{N_{k_t}(t)}}\leq&\frac1{12}\rho_t\\
            C_{\tau}\cdot\frac1{\sqrt{\tau}}\leq&\frac1{12}\rho_t.
        \end{aligned}
    \end{equation}
    Hence, we have
    \begin{equation}
        \label{eq:derivative_bound_by_rho_case_2}
        \begin{aligned}
            |r'_t(p_{k_t, t})|\leq2\Delta_{k_t}(t) + \theta_t\leq 2\cdot(\frac1{12}\rho_t + \frac1{12}\rho_t) + \frac{\rho_t}6=\frac{\rho_t}2.
        \end{aligned}
    \end{equation}
    Since $r'_t(p)$ is monotonically non-increasing, we know that $p_{k_t, t}\in(p_t^*-\epsilon_t, p_t^*+\epsilon_t)$ similar to the analysis in Case (1), and again we have $|r_t(p_t^*)-r_t(p_{k_t, t})|\leq 8C_v(C_N^2\cdot\frac1{N_{k_t}(t)}+C_{\tau}^2\cdot\frac1{\tau})$. This completes the proof of \Cref{lemma:c20_to_performance} on both circumstances.
\end{proof}

\subsection{Proof of \Cref{lemma:corner_case}}
\label{app:subsec:proof_lemma_corner_case}
\begin{proof}
    When $\gamma_t>\frac{\hat a + C_a\cdot\frac1{\sqrt{\tau}}}2+c>\frac{a}2+c$, we know that the demand at $p=\frac{a}{2b}$ and its neighborhood is not censored. As a result, $\frac{a}{2b}$ is still a local optimal (and therefore global optimal due to the unimodality) of $r_t(p)$. According to \Cref{lemma:a_b_estimation_error}, we have:
    \begin{equation}
        \label{eq:regret_corner_case_1}
        \begin{aligned}
            r_t(\frac{a}{2b})-r_t(\frac{\hat a}{2\hat b})&\leq C_s(\frac{a}{2b}-\frac{\hat a}{2\hat b})^2\\
            &= C_s((\frac{a}{2b}-\frac{a}{2\hat b}) + (\frac{a}{2\hat b}-\frac{\hat a}{2\hat b}))^2\\
            &\leq C_s(\frac{a|\hat b - b|}{2b\hat b} + \frac{a-\hat a}{2\hat b})^2\\
            &\leq 2C_s(\frac{a_{\max}^2}{4b_{\min}^4} (\hat b - b)^2 + \frac1{4b_{\min}^2}(a-\hat a)^2)\\
            &\leq \frac{2C_s(a_{\max}^2 C_b^2 + b_{\min}^2C_a^2)}{4b_{\min}^4}\cdot\frac1{\tau}.
        \end{aligned}
    \end{equation}
    When $\hat r_{k,t}-\Delta_k(t)>0, \forall k=-M, -M+1,\ldots, M-1, M$, we know that $r'_t(\frac{a+c-\gamma_t}b)>0$ according to \Cref{lemma:r_derivative_estimation_error}. Since $d_t(\frac{a+c-\gamma_t}b)=a-b\cdot\frac{a+c-\gamma_t}b+N_t = \gamma_t-c+N_t\leq\gamma_t$ is not censored, we know that the optimal price $p_t^*$ satisfies $p_t^*>\frac{a+c-\gamma_t}b$ (since $r'_t(p_t^*)=0<r'_t(\frac{a+c-\gamma_t}b)$) and therefore its demand is not censored. Therefore, the optimal price $p_t^*=\frac{a}{2b}$ and we have its regret bounded by \Cref{eq:regret_corner_case_1} identically. Let $C_{non}:=\frac{2C_s(a_{\max}^2 C_b^2 + b_{\min}^2C_a^2)}{4b_{\min}^4}$ and we have proven both cases.
\end{proof}

\label{general:appendix}

%




\end{document}